\definecolor{mydarkblue}{rgb}{0,0.08,0.45}
\newtheorem{theorem}{Theorem}
\newcommand{\norm}[1]{\left\lVert#1\right\rVert}
\newcommand{\latin}[1]{{\it #1}}
\algnewcommand\INPUT{\item[\textbf{Input:}]}%
\algnewcommand\OUTPUT{\item[\textbf{Output:}]}
\definecolor{antiquebrass}{rgb}{0.44, 0.47, 0.77}
\title{Kronecker Recurrent Units}
\author{
  \textbf{Cijo Jose}\\
  Idiap Research Institute $\&$ EPFL\\
  \texttt{cijo.jose@idiap.ch}\\ 
   \And
   \textbf{Moustapha Ciss\'{e}}\\
   Facebook AI Research\\
   \texttt{moustaphacisse@fb.com}\\ 
  \And
   \textbf{Fran\c{c}ois Fleuret}\\
   Idiap Research Institute $\&$ EPFL\\
  \texttt{francois.fleuret@idiap.ch}\\
}
\begin{document}

\maketitle
\begin{abstract}
Our work addresses two important issues with recurrent neural networks: (1) they are over-parametrized, and (2) the recurrent weight matrix is ill-conditioned. The former increases the sample complexity of learning and the training time. The latter causes the vanishing and exploding gradient problem.

We present a flexible recurrent neural network model called Kronecker Recurrent Units (KRU). KRU achieves parameter efficiency in RNNs through a Kronecker factored recurrent matrix. It overcomes the ill-conditioning of the recurrent matrix by enforcing soft unitary constraints on the factors. Thanks to the small dimensionality of the factors, maintaining these constraints is computationally efficient.

Our experimental results on seven standard data-sets reveal that KRU can reduce the number of parameters by three orders of magnitude in the recurrent weight matrix compared to the existing recurrent models, without trading the statistical performance.


These results in particular show that while there are advantages in having a high dimensional recurrent space, the capacity of the recurrent part of the model can be dramatically reduced.

\end{abstract}

\section{Introduction}

Deep neural networks have defined the state-of-the-art in a wide range of problems in computer vision, speech analysis, and natural language processing~\citep{krizhevsky2012imagenet, hinton2012deep, mikolov2012statistical}. However, these models suffer from two key issues. (1) They are over-parametrized; thus it takes a very long time for training and inference. (2) Learning deep models is difficult because of the poor conditioning of the matrices that parameterize the model. These difficulties are especially relevant to recurrent neural networks. Indeed, the number of distinct parameters in RNNs grows as the square of the size of the hidden state conversely to convolutional networks which enjoy weight sharing. Moreover, poor conditioning of the recurrent matrices results in the gradients to explode or vanish exponentially fast along the time horizon. This problem prevents RNN from capturing long-term dependencies~\citep{hochreiter1991untersuchungen, bengio1994learning}.

There exists an extensive body of literature addressing over-parametrization in neural networks. \cite{lecun1990optimal} first studied the problem and proposed to remove unimportant weights in neural networks by exploiting the second order information. Several techniques which followed include low-rank decomposition~\citep{denil2013predicting}, training a small network on the soft-targets predicted by a big pre-trained network~\citep{ba2014deep}, low bit precision training~\citep{courbariaux2014low}, hashing~\citep{chen2015compressing}, etc.  A notable exception is the deep fried convnets~\citep{yang2015deep} which explicitly parameterizes the fully connected layers in a convnet with a computationally cheap and parameter-efficient structured linear operator, the Fastfood transform~\citep{le2013fastfood}. These techniques are primarily aimed at feed-forward fully connected networks and very few studies have focused on the particular case of recurrent networks~\citep{arjovsky2015unitary}.

The problem of vanishing and exploding gradients has also received significant attention. \cite{hochreiter1997long} proposed an effective gating mechanism in their seminal work on LSTMs. Later, this technique was adopted by other models such as the Gated Recurrent Units (GRU)~\citep{chung2015gated} and the Highway networks~\citep{srivastava2015highway} for recurrent and feed-forward neural networks respectively. Other popular strategies include gradient clipping~\citep{pascanu2013difficulty}, and orthogonal initialization of the recurrent weights~\citep{le2015simple}. More recently~\citep{arjovsky2015unitary} proposed to use a unitary recurrent weight matrix. The use of norm preserving unitary maps prevent the gradients from exploding or vanishing, and thus help to capture long-term dependencies. The resulting model called unitary RNN (uRNN) is computationally efficient since it only explores a small subset of general unitary matrices. Unfortunately, since uRNNs can only span a reduced subset of unitary matrices their expressive power is limited~\citep{wisdom2016full}. We denote this restricted capacity unitary RNN as RC uRNN.  Full capacity unitary RNN (FC uRNN)~\citep{wisdom2016full} proposed to overcome this issue by parameterizing the recurrent matrix with a full dimensional unitary matrix, hence sacrificing computational efficiency. Indeed, FC uRNN requires a computationally expensive projection step which takes $\mathcal{O}(N^3)$ time ($N$ being the size of the hidden state) at each step of the stochastic optimization to maintain the unitary constraint on the recurrent matrix. \cite{mhammedi2016efficient} in their orthogonal RNN (oRNN) avoided the expensive projection step in FC uRNN by parametrizing the orthogonal matrices using Householder reflection vectors, it allows a fine-grained control over the number of parameters by choosing the  number of Householder reflection vectors. When the number of Householder reflection vector approaches $N$ this parametrization spans the full reflection set, which is one of the disconnected subset of the full orthogonal set. \cite{jing17a} also presented a way of parametrizing unitary matrices which allows fine-grained control on the number of parameters. This work called as Efficient Unitary RNN (EURNN), exploits the continuity of unitary set to have a tunable parametrization ranging from a subset to the full unitary set.

Although the idea of parametrizing recurrent weight matrices with strict unitary linear operator is appealing, it suffers from several issues: (1) Strict unitary constraints severely restrict the search space of the model, thus making the learning process unstable. (2) Strict unitary constraints make forgetting irrelevant information difficult. While this may not be an issue for problems with non-vanishing long term influence, it causes failure when dealing with real world problems that have vanishing long term influence~\ref{sec:inf-suc}. \cite{henaff2016orthogonal} have previously pointed out that the good performance of strict unitary models on certain synthetic problems is because it exploits the biases in these data-sets which favors a unitary recurrent map and these models may not generalize well to real world data-sets. More recently~\cite{vorontsov2017orthogonality} have also studied this problem of unitary RNNs and the authors found out that relaxing the strict unitary constraint on the recurrent matrix to a soft unitary constraint improved the convergence speed as well as the generalization performance.

Our motivation is to address the problems of existing recurrent networks mentioned above. We present a new model called Kronecker Recurrent Units (KRU). At the heart of KRU is the use of Kronecker factored recurrent matrix which provide an elegant way to adjust the number of parameters to the problem at hand. This factorization allows us to finely modulate the number of parameters required to encode $N\times N$ matrices, from $\mathcal{O}(\log(N))$ when using factors of size $2 \times 2$, to $\mathcal{O}(N^2)$ parameters when using a single factor of the size of the matrix itself. We tackle the vanishing and exploding gradient problem through a soft unitary constraint~\citep{jose2016scalable, henaff2016orthogonal, cisse2017parseval, vorontsov2017orthogonality}. Thanks to the properties of Kronecker matrices~\citep{van2000ubiquitous}, this constraint can be enforced efficiently. Please note that KRU can readily be plugged into vanilla real space RNN, LSTM and other variants in place of standard recurrent matrices. However in case of LSTMs we do not need to explicitly enforce the approximate orthogonality constraints as the gating mechanism is designed to prevent vanishing and exploding gradients. Our experimental results on seven standard data-sets reveal that KRU and KRU variants of real space RNN and LSTM can reduce the number of parameters drastically (hence the training and inference time) without trading the statistical performance. Our core contribution in this work is a flexible, parameter efficient and expressive recurrent neural network model which is robust to vanishing and exploding gradient problem.

The paper is organized as follows, in section 2 we restate the formalism of RNN and detail the core motivations for KRU. In section 3 we present the Kronecker recurrent units (KRU). We present our experimental findings in section 4 and section 5 concludes our work.

\vspace*{-0.5em}

\begin{table}[H]
\caption{Notations}
\centering
\begin{tabular*}{1.0\textwidth}{ll}
\hline
$D, N, M$ \ Input, hidden and output dimensions \\
$\mathbf{x}_t \in \mathbb{R}^D$ or $ \mathbb{C}^D$, $\mathbf{h}_t  \in \mathbb{C}^D$ \ Input and hidden state at time $t$ \\
$\mathbf{y}_t \in \mathbb{R}^M$ or $ \mathbb{C}^M$, $\mathbf{\hat{y}}_t \in \mathbb{R}^M$ or $ \mathbb{C}^M$ \  Prediction targets and RNN predictions at time $t$ \\
$\mathbf{U} \in \mathbb{C}^{N \times D}, \mathbf{W} \in \mathbb{C}^{N \times N}, \mathbf{V} \in \mathbb{C}^{M \times N}   $ Input, recurrent amd  output weight matrices \\
$\mathbf{b} \in \mathbb{R}^{N}$ or $\mathbb{C}^{N}, \mathbf{c} \in \mathbb{R}^{M}$ or $\mathbb{C}^{M}$ Hidden and output bias \\
$\sigma(.), \mathcal{L}(\mathbf{\hat{y}}, \mathbf{y})$ Point-wise non-linear activation function and the loss function\\
\hline
\end{tabular*}
\label{table_notations}
\end{table}

\section{Recurrent neural network formalism}

Table~\ref{table_notations} summarizes some notations that we use in the paper. We consider the field to be complex rather than real numbers. We will motivate the choice of complex numbers later in this section. Consider a standard recurrent neural network~\citep{elman1990finding}. Given a sequence of $T$ input vectors: $\mathbf{x}_{0}, \mathbf{x}_{1}, \dots, \mathbf{x}_{T-1}$, at a time step $t$ RNN performs the following:
\begin{align}
\mathbf{h}_{t} & = \sigma(\mathbf{W}\mathbf{h}_{t-1} + \mathbf{U}\mathbf{x}_{t} + \mathbf{b}) \label{eq1} \\
\hat{\mathbf{y}}_{t} & = \mathbf{V}\mathbf{h}_{t} + \mathbf{c}, \label{eq2}
\end{align}
where $\hat{\mathbf{y}}_{t}$ is the predicted value at time step $t$.

\subsection{Over parameterization and computational efficiency}
\label{ce}
The total number of parameters in a RNN is $c(DN + N^2 + N + M + MN)$, where $c$ is $1$ for real and $2$ for complex parametrization. As we can see, the number of parameters grows quadratically with the hidden dimension, \latin{i.e.}, $\mathcal{O}(N^2)$. We show in the experiments that this quadratic growth is an over parametrization for many real world problems. Moreover, it has a direct impact on the computational efficiency of RNNs because the evaluation of $\mathbf{W}\mathbf{h}_{t-1}$ takes $\mathcal{O}(N^2)$ time and it recursively depends on previous hidden states. However, other components $\mathbf{U}\mathbf{x}_{t}$ and $\mathbf{V}\mathbf{h}_{t}$ can usually be computed efficiently by a single matrix-matrix multiplication for each of the components. That is, we can perform  $\mathbf{U}[\mathbf{x}_{0}, \dots, \mathbf{x}_{T}]$ and  $\mathbf{V}[\mathbf{h}_{0}, \dots, \mathbf{h}_{T-1}]$, this is efficient using modern BLAS libraries. So to summarize, if we can control the number of parameters in the recurrent matrix $\mathbf{W}$, then we can control the computational efficiency.

\subsection{Poor conditioning implies gradients explode or vanish}
The vanishing and exploding gradient problem refers to the decay or growth of the partial derivative of the loss $\mathcal{L}(.)$ with respect to the hidden state $\mathbf{h}_t$ \latin{i.e.} $\frac{\partial \mathcal{L}}{\partial \mathbf{h}_{t}}$ as the number of time steps $T$ grows~\citep{arjovsky2015unitary}. By the application of the chain rule, the following can be shown~\citep{arjovsky2015unitary}:
\begin{equation}
\norm{ \frac{\partial \mathcal{L}}{\partial \mathbf{h}_{t}} } \leq \norm{\mathbf{W}}^{T-t}.\label{eq3}
\end{equation}

From Equation \ref{eq3}, it is clear that if the absolute value of the eigenvalues of $\mathbf{W}$ deviates from 1 then  $\frac{\partial \mathcal{L}}{\partial \mathbf{h}_{t}}$ may explode or vanish exponentially fast with respect to $T-t$. So a strategy to prevent vanishing and exploding gradient is to control the spectrum of $\mathbf{W}$.

\subsection{Why complex field?}
\label{subsec_comp}
Although~\cite{arjovsky2015unitary} and ~\cite{wisdom2016full} use complex valued networks with unitary constraints on the recurrent matrix, the motivations for such models are not clear. We give a simple but compelling reason for complex-valued recurrent networks.

The absolute value of the determinant of a unitary matrix is $1$. Hence in the real space, the set of all unitary (orthogonal) matrices have a determinant of $1$ or $-1$, \latin{i.e.}, the set of all rotations and reflections respectively. Since the determinant is a continuous function, the unitary set in real space is disconnected. Consequently, with the real-valued networks we cannot span the full unitary set using the standard continuous optimization procedures. On the contrary, the unitary set is connected in the complex space as its determinants are the points on the unit circle and we do not have this issue.

As we mentioned in the introduction~\citep{jing17a} uses this continuity of unitary space to have a tunable continuous parametrization ranging from subspace to full unitary space. Any continuous parametrization in real space can only span a subset of the full orthogonal set. For example, the Householder parametrization~\citep{mhammedi2016efficient} suffers from this issue.

\section{Kronecker recurrent units (KRU)}

We consider parameterizing the recurrent matrix  $\mathbf{W}$ as a Kronecker product of $F$ matrices $\mathbf{W}_0, \dots, \mathbf{W}_{F-1}$,
\begin{align}
\mathbf{W} & =  \mathbf{W}_{0} \otimes \dots \otimes
\mathbf{W}_{F-1} =  \otimes_{f=0}^{F-1}\mathbf{W}_{f} \label{eq4}.
\end{align}
Where each $\mathbf{W}_f \in \mathbb{C}^{P_{f} \times Q_{f}}$ and $ \prod_{f=0}^{F-1}P_{f} = \prod_{f=0}^{F-1}Q_{f} = N$. $\mathbf{W}_f$'s are called as Kronecker factors.

To illustrate the Kronecker product of matrices, let us consider the simple case when $\forall_f \{P_f = Q_f = 2\}$. This implies $F = \log{}_{2}N$. And  $\mathbf{W}$ is recursevly defined as follows:
\begin{align}
\mathbf{W} = \otimes_{f=0}^{\log{}_{2}N - 1}\mathbf{W}_{f}
&=  \left[ \begin{array}{cc}
\mathbf{w}_{0}(0, 0) & \mathbf{w}_{0}(0, 1) \\
\mathbf{w}_{0}(1, 0) & \mathbf{w}_{0}(1, 1) \end{array}
\right] \otimes_{f=1}^{\log{}_{2}N-1}\mathbf{W}_{f} \label{eq5},\\
&=  \left[ \begin{array}{cc}
\mathbf{w}_{0}(0, 0)\mathbf{W}_{1} & \mathbf{w}_{0}(0, 1)\mathbf{W}_{1} \\
\mathbf{w}_{0}(1, 0)\mathbf{W}_{1} & \mathbf{w}_{0}(1, 1)\mathbf{W}_{1} \end{array} \right] \otimes_{f=2}^{\log{}_{2}N-1}\mathbf{W}_{f}.
\end{align}

When $\forall_f \{p_f = q_f = 2\}$ the number of parameters is $8\log_{2}{}N$ and the time complexity of hidden state computation is $\mathcal{O}(N\log_{2}{}N)$. When $\forall_f \{p_f = q_f = N\}$ then $F = 1$ and we will recover standard complex valued recurrent neural network. We can span every Kronecker representations in between by choosing the number of factors and the size of each factor. In other words, the number of Kronecker factors and the size of each factor give us fine-grained control over the number of parameters and hence over the computational efficiency. This strategy allows us to design models with the appropriate trade-off between computational budget and statistical performance. All the existing models lack this flexibility.

The idea of using Kronecker factorization for approximating Fisher matrix in the context of natutal gradient methods have recently recieved much attention. The algorithm was originally presented in~\cite{martens2015optimizing} and was later extended to convolutional layers~\citep{grosse2016kronecker}, distributed second order optimization~\citep{ba2016distributed} and for deep reinforcement learning~\citep{wu2017scalable}. However Kronecker matrices have not been well explored as learnable parameters except~\citep{zhang2015fast} used it's spectral property for fast orthogonal projection and~\citep{zhou2015exploiting} used it as a layer in convolutional neural networks.


\subsection{Soft unitary constraint}
Poor conditioning results in  vanishing or exploding gradients. Unfortunately, the standard solution which consists of optimization on the strict unitary set suffers from the retention of noise over time. Indeed, the small eigenvalues of the recurrent matrix can represent a truly vanishing long-term influence on the particular problem and in that sense, there can be good or bad vanishing gradients. Consequently, enforcing strict unitary constraint (forcing the network to never forget) can be a bad strategy. A simple solution to get the best of both worlds is to enforce unitary constraint approximately by using the following regularization:
\begin{equation}
\norm{\mathbf{W}_f^{H}\mathbf{W}_f-\mathbf{I}}^{2}, \forall f \in \{0, \dots, F-1\}\label{eq6}
\end{equation}
Please note that these constraints are enforced on each factor of the Kronecker factored recurrent matrix. This procedure is computationally very efficient since the size of each factor is typically small. It suffices to do so because if each of the Kronecker factors $\{\mathbf{W}_0, \dots, \mathbf{W}_{F-1}\}$  are unitary then the full matrix $\mathbf{W}$ is unitary~\citep{van2000ubiquitous} and if each of the factors are approximately unitary then the full matrix is approximately unitary. We apply soft unitary constraints as a regularizer whose strength is cross-validated on the validation set.

This type of regularizer has recently been exploited for real-valued models. \citep{cisse2017parseval} showed that enforcing approximate orthogonality constraint on the weight matrices make the network robust to adversarial samples as well as improve the learning speed. In metric learning~\citep{jose2016scalable} have shown that it better conditions the projection matrix thereby improving the robustness of stochastic gradient over a wide range of step sizes as well asthe generalization performance. \cite{henaff2016orthogonal} and \cite{vorontsov2017orthogonality} have also used this soft unitary contraints on standard RNN after identifying the problems with the strict unitary RNN models. However the computational complexity of naively applying this soft constraint is $\mathcal{O}(N^3)$. This is prohibitive for RNNs with large hidden state unless one considers a Kronecker factorization.

\section{Experiments}

Existing deep learning libraries such as Theano~\citep{bergstra2011theano},  Tensorflow~\citep{abadi2016tensorflow} and Pytorch~\citep{paszke2017pytorch} do not support fast primitives for Kronecker products with arbitrary number of factors. So we wrote custom CUDA kernels for Kronecker forward and backward operations. All our models are implemented in C++.  We will release our library to reproduce all the results which we report in this paper. We use $\tanh$ as activation function for RNN, LSTM and our model KRU-LSTM. Whereas RC uRNN, FC uRNN and KRU uses complex rectified linear units~\citep{arjovsky2015unitary}.

\subsection{Copy memory problem}\label{sec:copy-memory-problem}

Copy memory problem~\citep{hochreiter1997long} tests the model's ability  to recall a sequence after a long time gap. In this problem each sequence is of length $T$ + 20 and each element in the sequence come from 10 classes $\{ 0, \dots, 9\}$. The first 10 elements are sampled uniformly with replacement from  $\{1, \dots, 8\}$. The next $T-1$ elements are filled with $0$, the `blank' class followed by $9$, the `delimiter' and the remaining 10 elements are `blank' category. The goal of the model is to output a sequence of $T$ + 10 blank categories followed by the 10 element sequence from the beginning of the input sequence. The expected average cross entropy for a memory-less strategy is $\frac{10\log{}8}{T + 20}$.

\begin{figure}[h]
\centering
\centering
\resizebox{\linewidth}{!}{
\begin{tabular}{cc}
\resizebox{\textwidth}{!}{\includegraphics[width=\textwidth, trim=0 0 0 15,clip]{./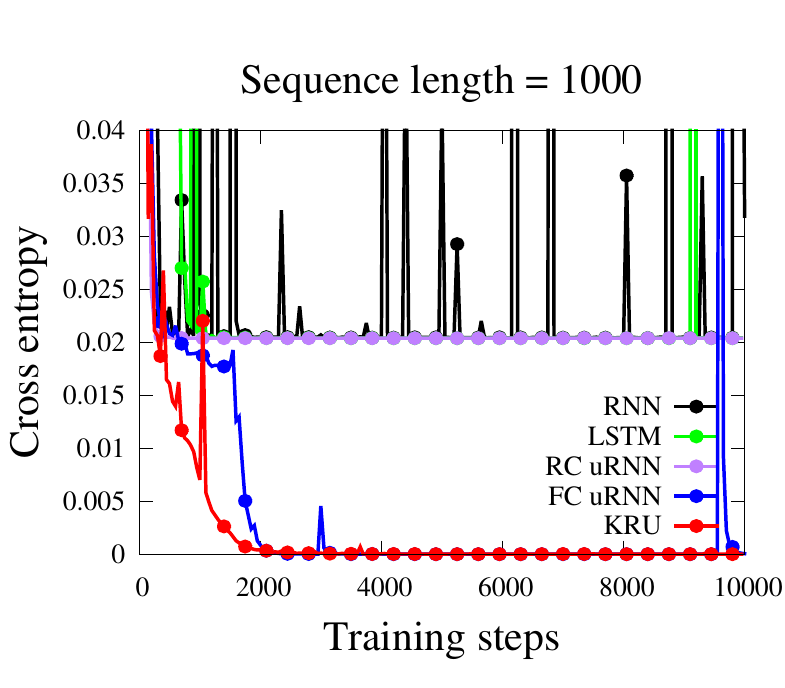}}
&
\resizebox{\textwidth}{!}{\includegraphics[width=\textwidth, trim=0 0 0 15,clip]{./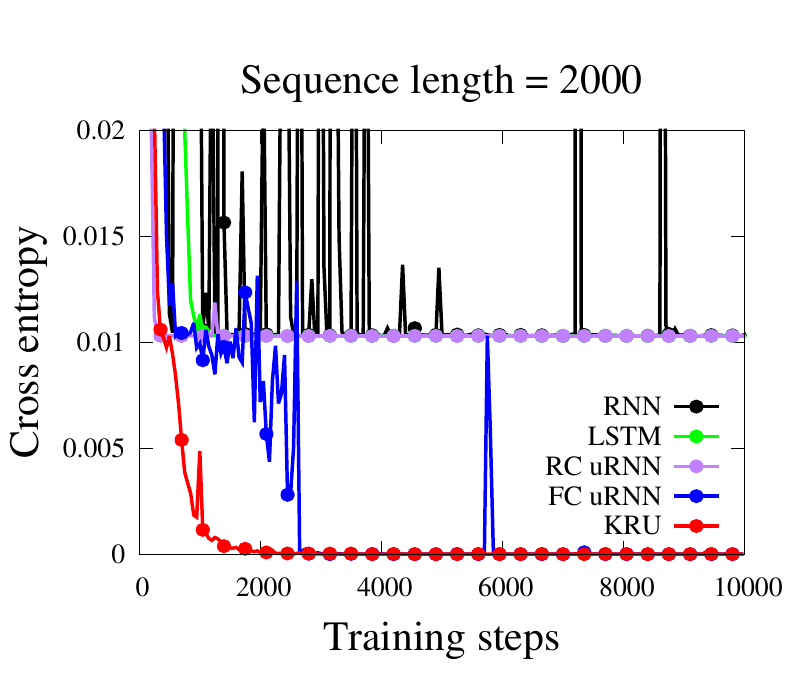}}
\end{tabular}
}
\caption{Learning curves on copy memory problem for $T$=1000 and $T$=2000.}
\label{fig_cmp}
\end{figure}

Our experimental setup closely follows~\cite{wisdom2016full} which in turn follows~\cite{arjovsky2015unitary} but $T$ extended to 1000 and 2000. Our model, KRU uses a hidden dimension $N$ of 128 with 2x2 Kronecker factors which corresponds to $\approx$5K parameters in total. We use a RNN of $N$ = 128 ($\approx$ 19K parameters) , LSTM of $N$ = 128 ( $\approx$ 72K parameters), RC uRNN of $N$ = 470 ( $\approx$ 21K parameters) , FC uRNN of $N$ = 128 ( $\approx$ 37K parameters). All the baseline models are deliberately chosen to have more parameters than KRU. Following~\cite{wisdom2016full, arjovsky2015unitary}, we choose the training and test set size to be 100K and 10K respectively. All the models were trained using RMSprop with a learning rate of $1\mathrm{e}{-3}$, decay of 0.9 and a batch size of 20. For both the settings $T$ = 1000 and $T$ = 2000, KRU converges to zero average cross entropy faster than FC uRNN. All the other baselines are stuck at the memory-less cross entropy.

The results are shown in figure~\ref{fig_cmp}. For this problem we do not learn the recurrent matrix of KRU, We initialize it by random unitary matrix and just learn the input to hidden, hidden to output matrices and the bias. We found out that this strategy already solves the problem faster than all other methods. Our model in this case is similar to a parametrized echo state networks (ESN). ESNs are known to be able to learn long-term dependencies if they are properly initialized~\citep{jaeger2001echo}. We argue that this data-set is not an ideal benchmark for evaluating RNNs in capturing long term dependencies. Just a unitary initialization of the recurrent matrix would solve the problem.

\subsection{Adding problem}\label{sec:adding-problem}

Following~\cite{arjovsky2015unitary} we describe the adding problem~\citep{hochreiter1997long}. Each input vector is composed of two sequences of length $T$. The first sequence is sampled from $\mathcal{U}[0,1]$. In the second sequence exactly two of the entries is 1, the `marker' and the remaining is 0. The first 1 is located uniformly at random in the first half of the sequence and the other 1 is located again uniformly at random in the other half of the sequence. The network's goal is to predict the sum of the numbers from the first sequence corresponding to the marked locations in the second sequence.

\begin{figure}[h]
\centering
\centering
\resizebox{\linewidth}{!}{
\begin{tabular}{cc}
\resizebox{\textwidth}{!}{\includegraphics[width=\textwidth, trim=0 0 0 15,clip]{./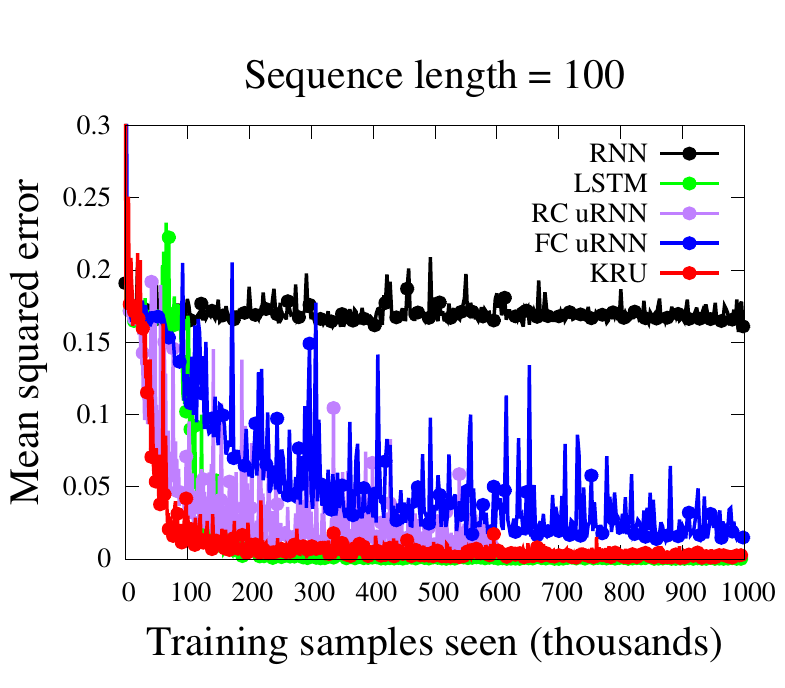}}
&
\resizebox{\textwidth}{!}{\includegraphics[width=\textwidth, trim=0 0 0 15,clip]{./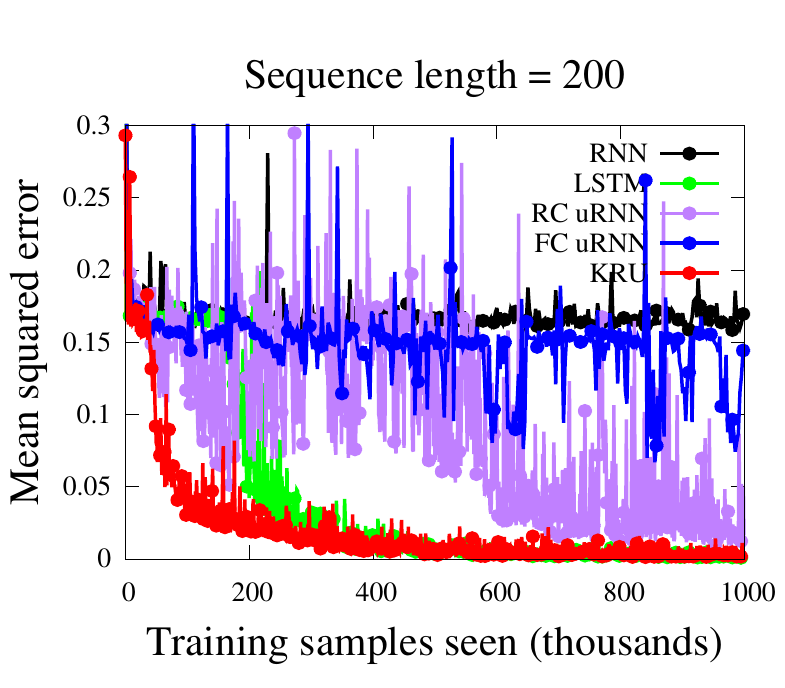}}
\\
\resizebox{\textwidth}{!}{\includegraphics[width=\textwidth, trim=0 0 0 15,clip]{./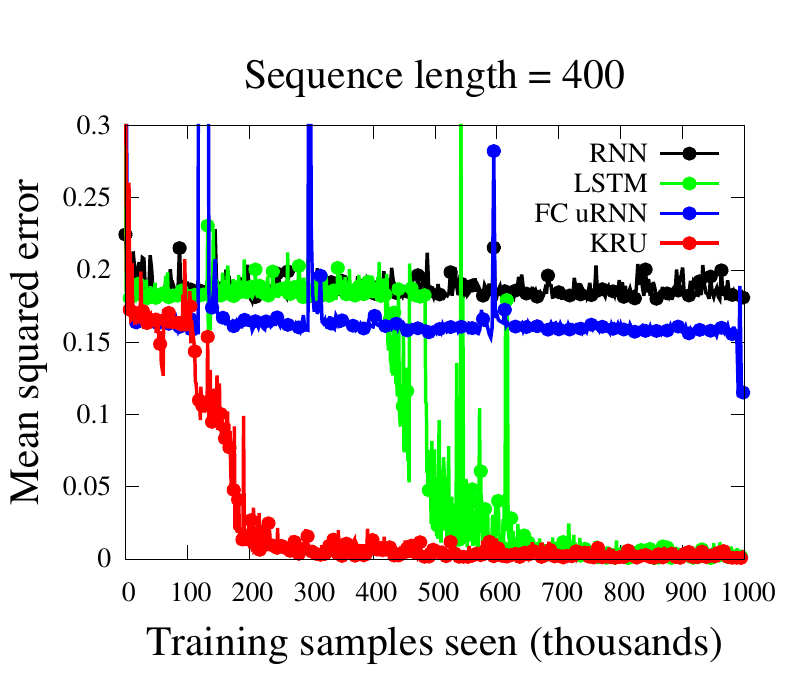}}
&
\resizebox{\textwidth}{!}{\includegraphics[width=\textwidth, trim=0 0 0 15, clip]{./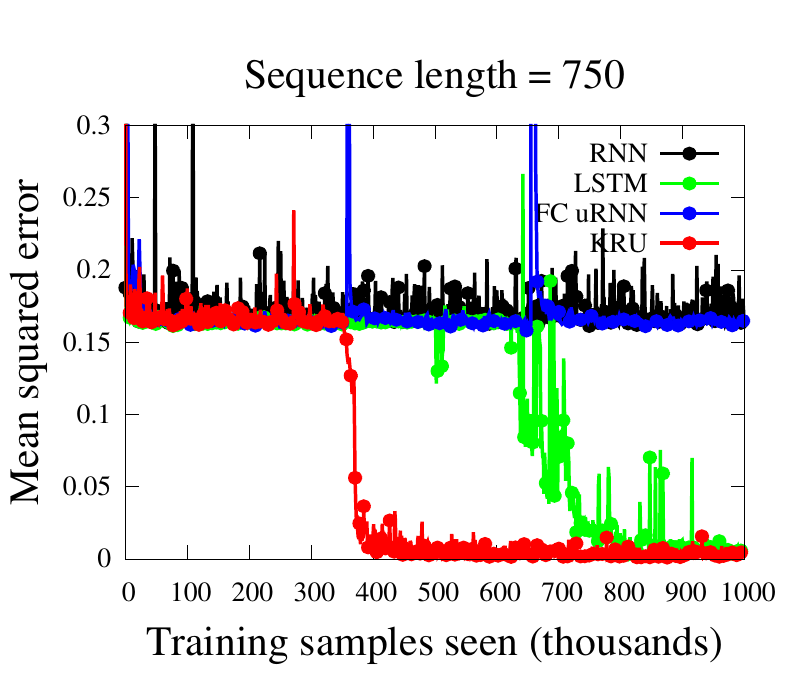}}
\end{tabular}
}
\caption{Results on adding problem for $T$=100, $T$=200, $T$=400 and $T$=750. KRU consistently outperforms the baselines on all the settings with fewer parameters.}
\label{fig_ap}
\end{figure}

We evaluate four settings as in~\cite{arjovsky2015unitary} with $T$=100, $T$=200, $T$=400, and $T$=750. For all four settings, KRU uses a hidden dimension $N$ of 512 with 2x2 Kronecker factors which corresponds to $\approx$3K parameters in total. We use a RNN of $N$ = 128 ($\approx$ 17K parameters) , LSTM of $N$ = 128 ( $\approx$ 67K parameters), RC uRNN of $N$ = 512 ( $\approx$ 7K parameters) , FC uRNN of $N$ = 128 ( $\approx$ 33K parameters). The train and test set sizes are chosen to be 100K and 10K respectively. All the models were trained using RMSprop with a learning rate of $1\mathrm{e}{-3}$ and a batch size of 20 or 50 with the best results are being reported here.

The results are presented in figure~\ref{fig_ap}. KRU converges faster than all other baselines even though it has much fewer parameters. This shows the effectiveness of soft unitary constraint which controls the flow of gradients through very long time steps and thus deciding what to forget and remember in an adaptive way. LSTM also converges to the solution and this is achieved through its gating mechanism which controls the flow of the gradients and thus the long term influence. However LSTM has 10 times more parameters than KRU. Both RC uRNN and FC uRNN converges for $T$ = 100 but as we can observe, the learning is not stable. The reason for this is that RC uRNN and FC uRNN retains noise since they are strict unitary models. Please note that we do not evaluate RC uRNN for $T$ = 400 and $T$ = 750 because we found out that the learning is unstable for this model and is often diverging.

\subsection{Pixel by pixel MNIST}\label{sec:pixel-pixel-mnist}

As outlined by~\cite{le2015simple}, we evaluate the Pixel by pixel MNIST task. MNIST digits are shown to the network pixel by pixel and the goal is to predict the class of the digit after seeing all the pixels one by one. We consider two tasks: (1) Pixels are read from left to right from top or bottom and (2) Pixels are randomly permuted before being shown to the network. The sequence length for these tasks is $T = 28\times28 = 784$. The size of the MNIST training set is 60K among which we choose 5K as the validation set. The models are trained on the remaining 55K points. The model which gave the best validation accuracy is chosen for test set evaluation. All the models are trained using RMSprop with a learning rate of $1\mathrm{e}{-3}$ and a decay of 0.9.

\begin{figure}[h]
\centering
\centering
\resizebox{\linewidth}{!}{
\begin{tabular}{cc}
\resizebox{\textwidth}{!}{\includegraphics[width=\textwidth, trim=0 0 0 15,clip]{./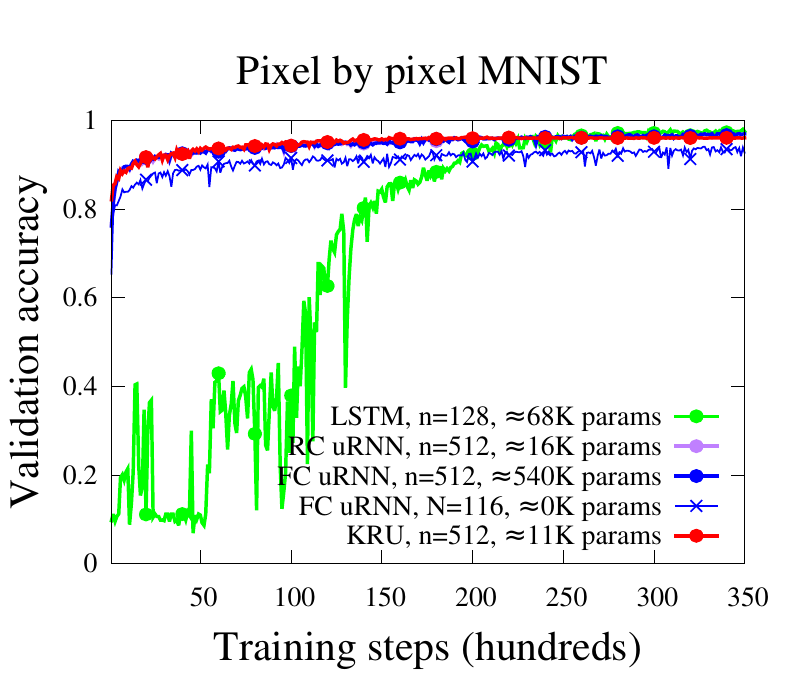}}
&
\resizebox{\textwidth}{!}{\includegraphics[width=\textwidth, trim=0 0 0 15,clip]{./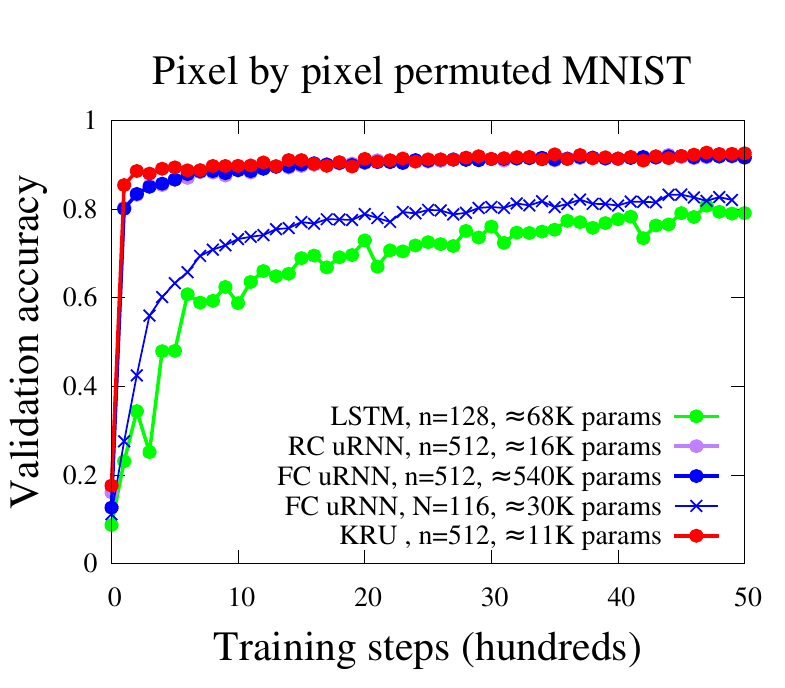}}
\end{tabular}
}
\caption{Validation accuracy on pixel by pixel MNIST and permuted MNIST class prediction as the learning progresses. }
\label{fig_mnist}
\end{figure}

\begin{table}[h]
\caption{KRU achieves state of the art performance on pixel by pixel permuted MNIST while having up to four orders of magnitude less parameters than other models.}
\resizebox{\linewidth}{!}{
\begin{tabular}{lccccccc}
\hline
\multirow{2}{*}{Model} & \multirow{2}{*}{n} & \multicolumn{2}{c}{\# Parameters} & \multicolumn{2}{c}{Unpermuted accuracy} & \multicolumn{2}{c}{Permuted accuracy}                                      \\
                       &                    & Total                             & Recurrent                      & Valid.  & Test  & Valid.  & Test \\
\hline \hline
LSTM~\citep{arjovsky2015unitary}              & 128                & $\approx$68K                      & $\approx$65K                   & 98.1    & \textbf{97.8} & 91.7            & 91.3          \\
RC uRNN~\citep{wisdom2016full}                & 512                & $\approx$16K                      & $\approx$3.6K                  & 97.9    & 97.5          & 94.2            & 93.3          \\
FC uRNN~\citep{wisdom2016full}                & 512                & $\approx$540K                     & $\approx$524K                  & 97.5    & 96.9          & 94.7            & 94.1          \\
FC uRNN~\citep{wisdom2016full}                & 116                & $\approx$30K                      & $\approx$27K                   & 92.7    & 92.8          & 92.2            & 92.1          \\
oRNN~\citep{mhammedi2016efficient}            & 256                & $\approx$11K                      & $\approx$8K                    & 97.0    & 97.2          & -               & -          \\
EURNN~\citep{jing17a}                         & 1024               & $\approx$13K                      & $\approx$4K                    & -       &  -            & 94.0            & 93.7          \\
KRU                                          & 512                & $\approx$11K                      & 72                              & 96.6    & 96.4          & 94.7            & \textbf{94.5} \\
\hline
\end{tabular}
}
\label{tab_mnist}
\end{table}

The results are summarized in figure ~\ref{fig_mnist} and table ~\ref{tab_mnist}. On the  unpermuted task LSTM achieve the state of the art performance even though the convergence speed is slow. Recently a low rank plus diagonal gated recurrent unit (LRD GRU)~\citep{barone2016low} have shown to achieves 94.7 accuracy on permuted MNIST with 41.2K parameters whereas KRU achieves 94.5 with just 12K parameters i.e KRU has 3x parameters less than LRD GRU. Please also note that KRU is a simple model without a gating mechanism. KRU can be straightforwardly plugged into LSTM and GRU to exploit the additional benefits of the gating mechanism which we will show in the next experiments with a KRU-LSTM.

\subsection{Character level language modelling on Penn TreeBank (PTB)}\label{sec:char-level-lang}

We now consider character level language modeling on Penn TreeBank data-set~\citep{marcus1993building}. Penn TreeBank is composed of 5017K characters in the training set, 393K characters in the validation set and 442K characters in the test set. The size of the vocabulary was limited to 10K most frequently occurring words and the rest of the words are replaced by a special $<$UNK$>$ character~\citep{mikolov2012statistical}. The total number of unique characters in the data-set is 50, including the special $<$UNK$>$ character.

All our models were trained for 50 epochs with a batch size of 50 and using ADAM~\citep{kingma2014adam}. We use a learning rate of $1\mathrm{e}{-3}$  which was found through cross-validation with default beta parameters~\citep{kingma2014adam}. If we do not see an improvement in the validation bits per character (BPC) after each epoch then the learning rate is decreased by 0.30. Back-propagation through time (BPTT) is unrolled for 30 time frames on this task.

We did two sets of experiments to have fair evaluation with the models whose results were available for a particular parameter setting~\citep{mhammedi2016efficient} and also to see how the performance evolves as the number of parameters are increased. We present our results in table~\ref{tab_ptb}. We observe that the strict orthogonal model, oRNN fails to generalize as well as other models even with a high capacity recurrent matrix. KRU and KRU-LSTM performs very close to RNN and LSTM with fewer parameters in the recurrent matrix. Please recall that the computational bottleneck in RNN is the computation of hidden states \ref{ce} and thus having fewer parameters in the recurrent matrix can significantly reduce the training and inference time.

Recently HyperNetworks~\citep{hyper2016} have shown to achieve the state of the art performance of 1.265 and 1.219 BPC on the PTB test set with 4.91 and 14.41 million parameters respectively. This is respectively 13 and 38 times more parameters than the KRU-LSTM model which achieves 1.47 test BPC. Also Recurrent Highway Networks (RHN)~\citep{zilly2016recurrent} proved to be a promising model for learning very deep recurrent neural networks.
Running experiments, and in particular exploring meta-parameters with models of that size, requires unfortunately computational means beyond what was at our disposal for this work. However, there is no reason that the consistent behavior and improvement observed on the other reference baselines would not generalize to that type of large-scale models.
%

\begin{table}[h]
\centering
\caption{Performance in BPC of KRU variants and other models for character level language modeling on Penn TreeBank data-set. KRU has fewer parameters in the recurrent matrix which significantly bring down training and inference time.}
\begin{tabular}{lcccccc}
\hline
\multirow{2}{*}{Model} & \multirow{2}{*}{N} & \multicolumn{2}{c}{\# Parameters} & \multirow{2}{*}{Valid. BPC} & \multirow{2}{*}{Test BPC} \\
&  &Total & Recurrent & & \\
\hline \hline
RNN &  300 & $\approx$120K  & 90K &1.65& 1.60\\
LSTM & 150 &  $\approx$127K & 90K & 1.63 & 1.59\\
oRNN~\citep{mhammedi2016efficient} & 512 & $\approx$183K  &$\approx$130K &1.73  &1.68\\
KRU & 411 & $\approx$120K   & $\approx$38K &1.65  &1.60\\
\hline
RNN &  600 & $\approx$420K & 360K & 1.56 & 1.51\\
LSTM & 300 &  $\approx$435K & 360K & \textbf{1.50} & \textbf{1.45}\\
KRU & 993  & $\approx$418K & $\approx$220K &1.53  &1.48\\
KRU-LSTM & 500  & $\approx$377K & $\approx$250K &1.53  &1.47\\
\hline
\end{tabular}
\label{tab_ptb}
\end{table}

\subsection{Polyphonic music modeling}\label{sec:music-model}
We exactly follow the experimental framework of~\cite{chung2014empirical} for Polyphonic music modeling~\citep{boulanger2012modeling}  on two datasets: JSB Chorales and Piano-midi. Similar to~\citep{chung2014empirical} our main objective here is to have a fair evaluation of different recurrent neural networks. We took the baseline RNN and LSTM models of~\citep{chung2014empirical} whose model sizes were chosen to be small enough to avoid overfitting. We choose the model size of KRU and KRU-LSTM in such way that it has fewer parameters compared to the baselines. As we can in the table~\ref{tab_polymusic} both our models (KRU and KRU-LSTM) overfit less and generalizes better. We also present the wall-clock running time of different methods in the figure~\ref{fig_wct}.


\begin{table}[H]
\centering
\caption{Average negative log-likelihood of KRU and KRU-LSTM compared to the baseline models.}
\begin{tabular}{lccccccccccc}
\hline
\multirow{2}{*}{Model} & \multirow{2}{*}{n} & \multicolumn{2}{c}{\# Parameters} & \multicolumn{2}{c}{JSB Chorales} & \multicolumn{2}{c}{Piano-midi}  \\
                       &                    &  Total & Recurrent                                        &   Train  & Test                  & Train & Test   \\
\hline \hline
RNN~\citep{chung2014empirical}                   & 100                      &  $\approx$20K  & 10K                 & 8.82     & 9.10                  & 5.64 & 9.03  \\
LSTM~\citep{chung2014empirical}                  & 36                       &  $\approx$20K  & $\approx$5.1K       & 8.15     & 8.67                  & 6.49 & 9.03  \\
KRU                                              & 100                      &  $\approx$10K  & 58                  & 7.90     & 8.59                  & 7.57 & 8.28\\
KRU-LSTM                                         & 45                       &  $\approx$19K  & 172                 & 7.47     & \textbf{8.54}         & 7.55 & \textbf{8.18} \\
\hline
\end{tabular}
\label{tab_polymusic}
\end{table}

\begin{figure}[h]
\centering
\centering
\resizebox{\linewidth}{!}{
\begin{tabular}{cc}
\resizebox{\textwidth}{!}{\includegraphics[width=\textwidth, trim=0 0 0 0,clip]{./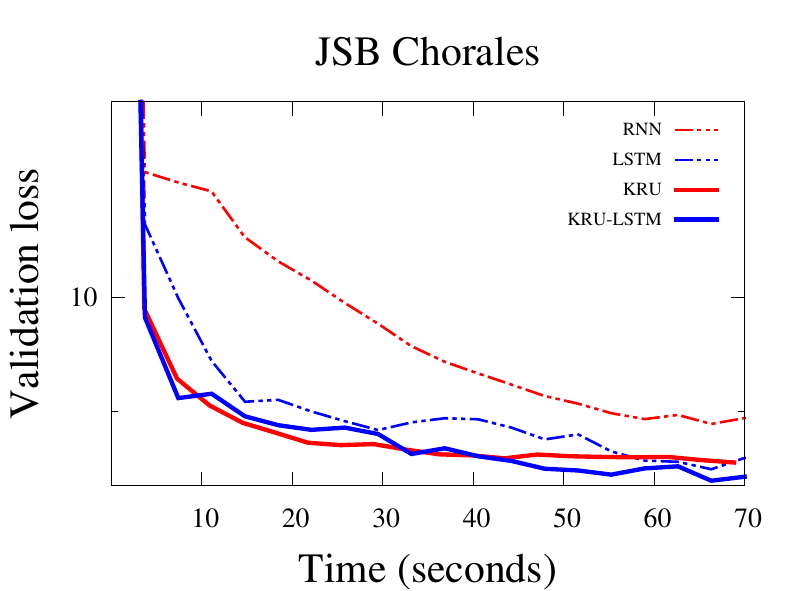}}
&
\resizebox{\textwidth}{!}{\includegraphics[width=\textwidth, trim=0 0 0 0,clip]{./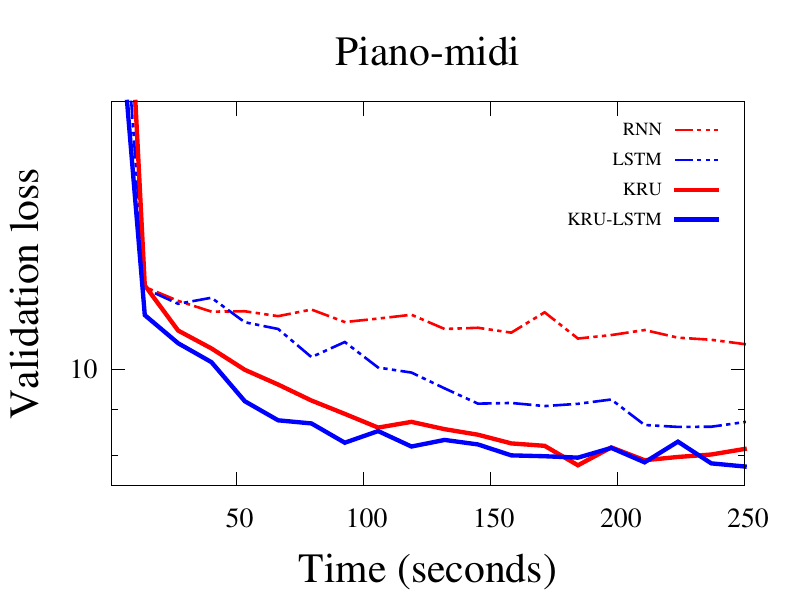}}
\\
\end{tabular}
}
\caption{Wall clock training time  on JSB Chorales and Piano-midi data-set.}
\label{fig_wct}
\end{figure}

\subsection{Framewise phoneme classification on TIMIT}\label{sec:fram-phon-class}

Framewise phoneme classification~\citep{graves2005framewise} is the problem of classifying the phoneme corresponding to a sound frame. We evaluate the models for this task on the real world TIMIT data-set~\citep{garofolo1993darpa}. TIMIT contains a training set of 3696 utterances among which we use 184 as the validation set. The test set is composed of 1344 utterances. We extract 12 Mel-Frequency Cepstrum Coefficients (MFCC)~\citep{mermelstein1976distance} from 26 filter banks and also the log energy per frame. We also concatenate the first derivative, resulting in a feature descriptor of dimension 26 per frame. The frame size is chosen to be 10ms and the window size is 25ms.

The number of time steps to which back-propagation through time (BPTT) is unrolled corresponds to the length of each sequence. Since each sequence is of different length this implies that for each sample  BPTT steps are different. All the models are trained for 20 epochs with a batch size of 1 using ADAM with default beta parameters~\citep{kingma2014adam}. The learning rate was cross-validated for each of the models from $\eta \in \{1\mathrm{e}{-2}, 1\mathrm{e}{-3}, 1\mathrm{e}{-4}\}$ and the best results are reported here. The best learning rate for all the models was found out to be $1\mathrm{e}{-3}$ for all the models. Again if we do not observe a decrease in the validation error after each epoch, we decrease the learning rate by a factor of $\gamma \in \{1\mathrm{e}{-1}, 2\mathrm{e}{-1}, 3\mathrm{e}{-1} \}$ which is again cross-validated. Figure~\ref{fig_timit} summarizes our results.

\begin{figure}[H]
\begin{minipage}{.50\textwidth}
\centering
\resizebox{\textwidth}{!}{\includegraphics[width=\textwidth, trim=0 0 0 15,clip]{./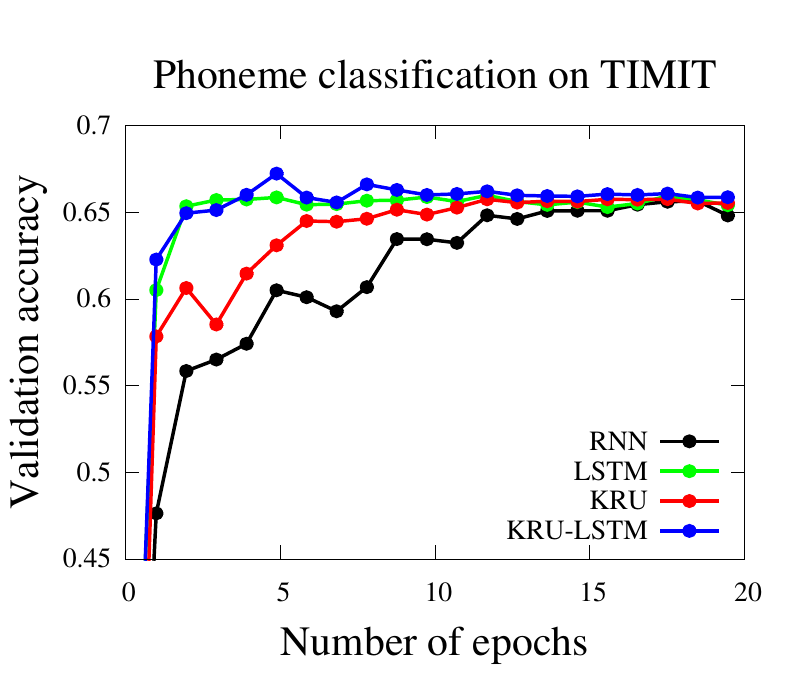}}
\end{minipage}\hfill
\begin{minipage}{.50\textwidth}
\centering
\resizebox{\textwidth}{!}{
\begin{tabular}{lcccccc}
\hline
\multirow{2}{*}{Model} & \multirow{2}{*}{N} & \multicolumn{2}{c}{\# Parameters} & \multirow{2}{*}{Valid. accuracy} & \multirow{2}{*}{Test accuracy} \\
&  &Total & Recurrent & & \\
\hline \hline
RNN &  600 & $\approx$406K & 360K & 65.84 & 64.53\\
LSTM & 300 &  $\approx$406K & 360K  & 65.99 & 64.56\\
KRU & 2048 & $\approx$195K & 16K &{65.91}  &{64.55} \\
KRU-LSTM & 2048 & $\approx$404K & 66K &\textbf{66.54}  &\textbf{64.81} \\
\hline
\end{tabular}}
\end{minipage}
\caption{KRU and KRU-LSTM performs better than the baseline models with far less parameters in the recurrent weight matrix on the challenging  TIMIT data-set~\citep{garofolo1993darpa}. This significantly bring down the training and inference time of RNNs. Both LSTM and KRU-LSTM converged within 5 epochs whereas RNN and KRU took 20 epochs. A similar result was obtained by~\citep{graves2005framewise} using RNN and LSTM with 4 times less parameters respectively than our models. However in their work the LSTM took 20 epochs to converge and the RNN  took 70 epochs. We have also experimented with the same model size as that of~\citep{graves2005framewise} and have obtained very similar results as in the table but at the expense of longer training times.}
\label{fig_timit}
\end{figure}

\subsection{Influence of soft unitary constraints}\label{sec:inf-suc}
Here we study the properties of soft unitary constraints on KRU. We use Polyphonic music modeling data-sets~\citep{boulanger2012modeling}: JSB Chorales and Piano-midi, as well as TIMIT data-set for this set of experiments. We varied the amplitude of soft unitary constraints from $1e-7$ to $1e-1$, the higher the amplitude the closer the recurrent matrix will be to the unitary set. All other hyper-parameters, such as the learning rate and the model size are fixed. We present our studies in the figure~\ref{fig_suc}. As we increase the amplitude we can see that the recurrent matrix is getting better conditioned and the spectral norm or the spectral radius is approaching towards 1. As we can see that the validation performance can be improved using this simple soft unitary constraints. For JSB Chorales the best validation performance is achieved at an amplitude of $1e-2$, whereas for Piano-midi it is at $1e-1$. 

For TIMIT phoneme recognition problem, the best validation error is achieved at $1e-5$ but as we increase the amplitude further, the performance drops. This might be explained by a vanishing long-term influence that has to be forgotten. Our model achieve this by cross-validating the amplitude of soft unitary constraints. These experiments also reveals the problems of strict unitary models such as RC uRNN~\citep{arjovsky2015unitary}, FC uRNN~\citep{wisdom2016full}, oRNN~\citep{mhammedi2016efficient} and EURNN~\citep{jing17a} that they suffer from the retention of noise from a vanishing long term influence and thus fails to generalize.   

\begin{figure}[h]
\centering
\centering
\resizebox{\linewidth}{!}{
\begin{tabular}{ccc}
\resizebox{\textwidth}{!}{\includegraphics[width=\textwidth, trim=0 0 0 0,clip]{./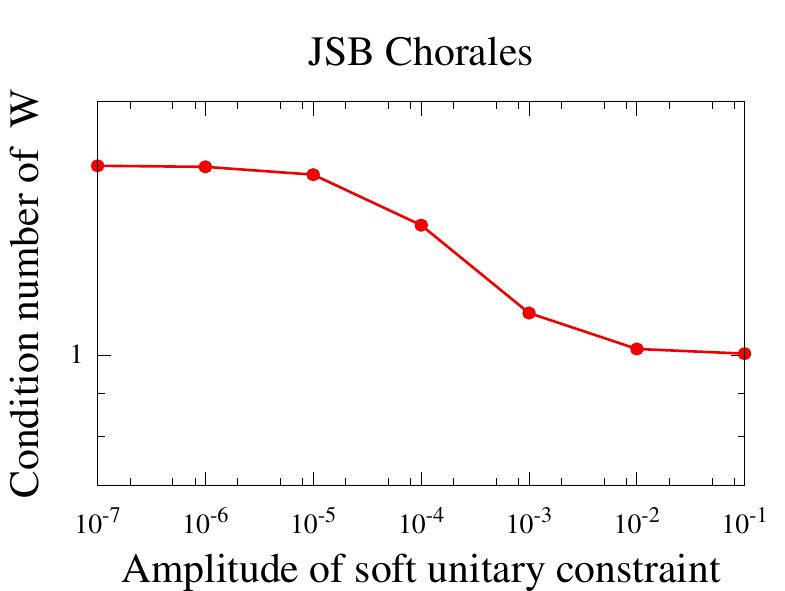}}
&
\resizebox{\textwidth}{!}{\includegraphics[width=\textwidth, trim=0 0 0 0,clip]{./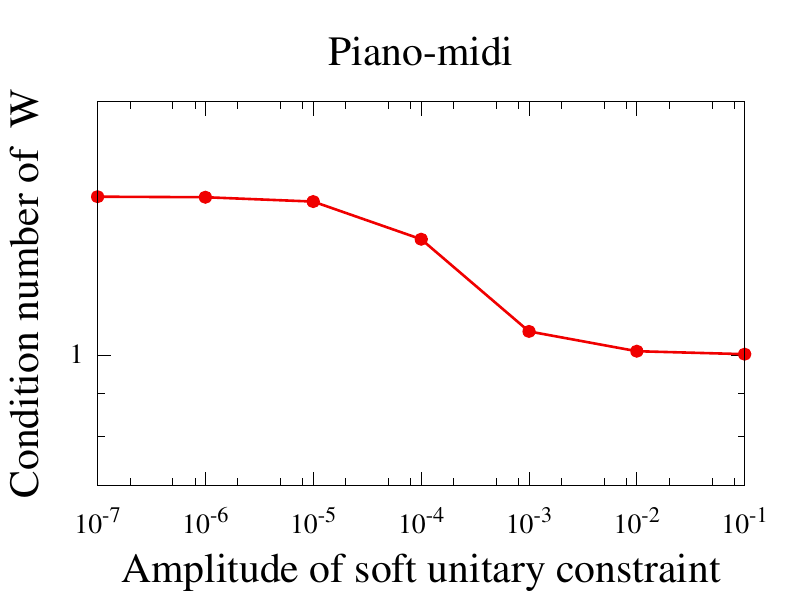}}
&
\resizebox{\textwidth}{!}{\includegraphics[width=\textwidth, trim=0 0 0 0,clip]{./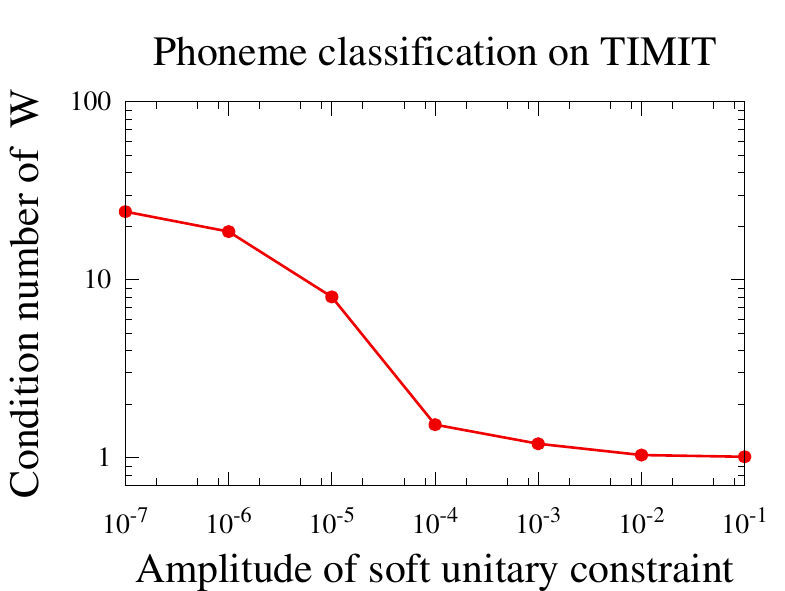}}
\\
\resizebox{\textwidth}{!}{\includegraphics[width=\textwidth, trim=0 0 0 0,clip]{./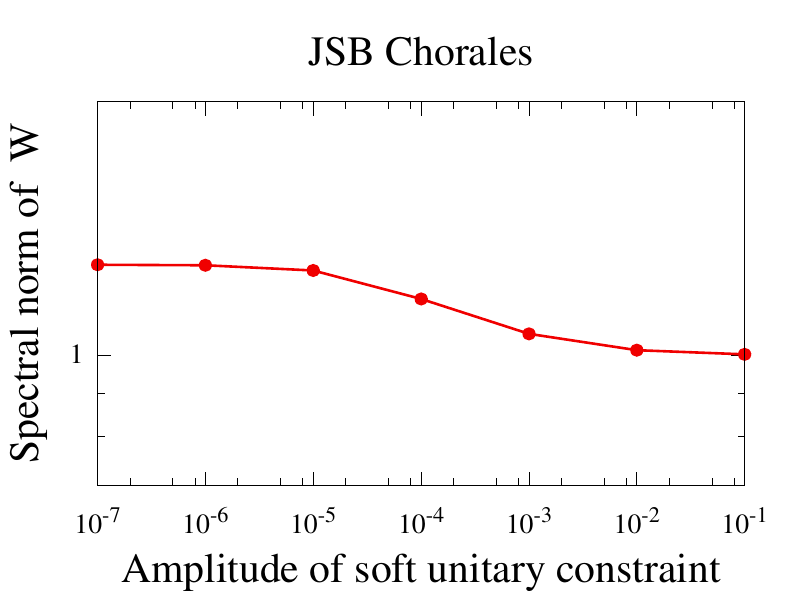}}
&
\resizebox{\textwidth}{!}{\includegraphics[width=\textwidth, trim=0 0 0 0, clip]{./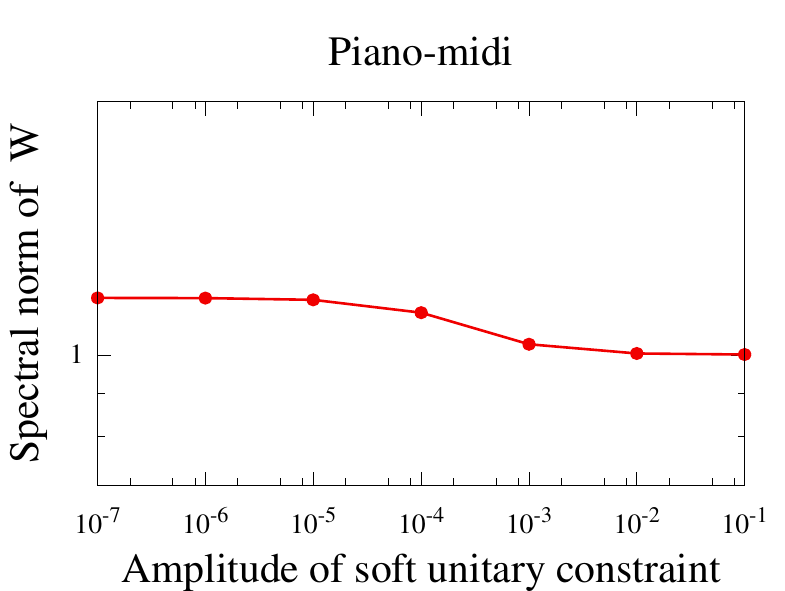}}
&
\resizebox{\textwidth}{!}{\includegraphics[width=\textwidth, trim=0 0 0 0, clip]{./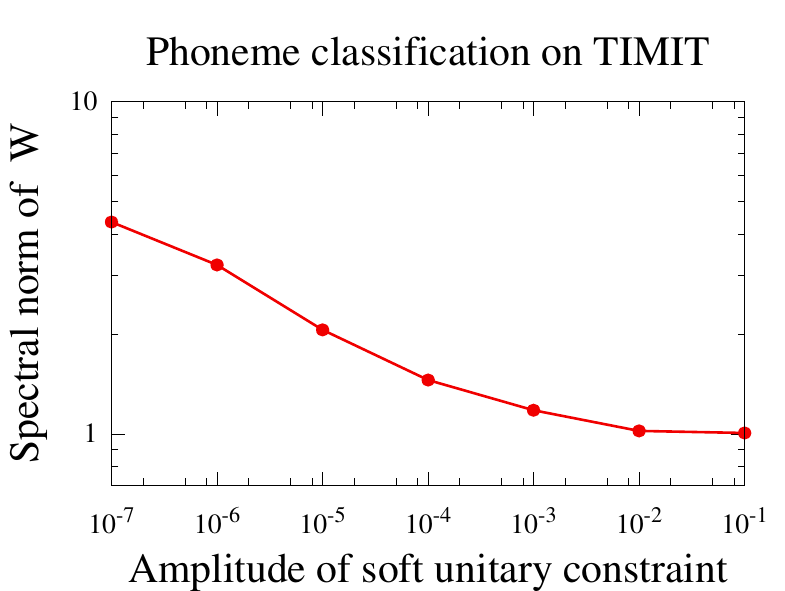}}
\\
\resizebox{\textwidth}{!}{\includegraphics[width=\textwidth, trim=0 0 0 0,clip]{./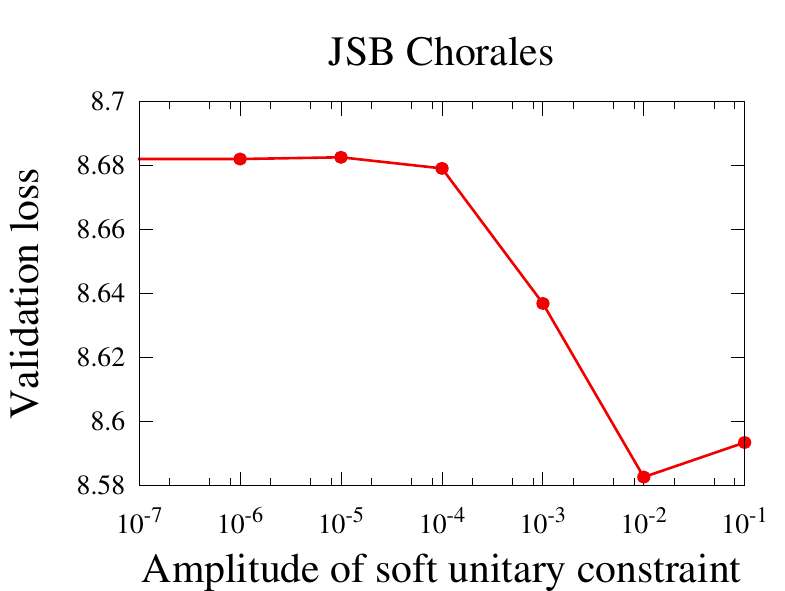}}
&
\resizebox{\textwidth}{!}{\includegraphics[width=\textwidth, trim=0 0 0 0, clip]{./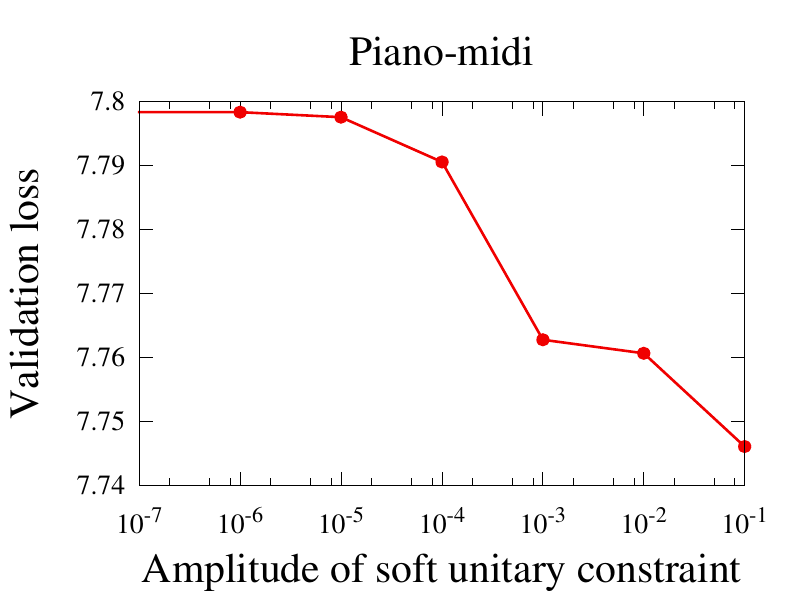}}
&
\resizebox{\textwidth}{!}{\includegraphics[width=\textwidth, trim=0 0 0 0, clip]{./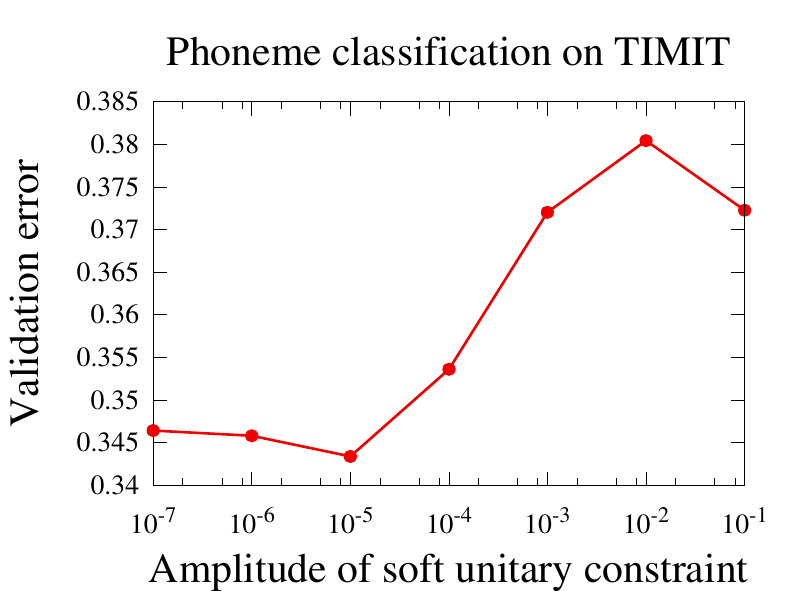}}
\end{tabular}
}
\caption{Analysis of soft unitary constraints on three data-sets. First, second and the third column presents JSB Chorales, Piano-midi and TIMIT data-sets respectively.}
\label{fig_suc}
\end{figure}
      
A popular heuristic strategy to avoid exploding gradients in RNNs and thereby making their training robust and stable is gradient clipping. Most of the state of the art RNN models use gradient clipping for training. Please note that we are not using gradient clipping with KRU. Our soft unitary constraints offer a principled alternative to gradient clipping.

Moreover \cite{hardt2016gradient} recently showed that gradient descent converges to the global optimizer of linear recurrent neural networks even though the learning problem is non-convex. The necessary condition for the global convergence guarantee requires that the spectral norm of recurrent matrix is bounded by 1. This seminal theoretical result also inspires to use regularizers which control the spectral norm of the recurrent matrix, such as the soft unitary constraints.

\section{Conclusion}

We have presented a new recurrent neural network model based on its core a Kronecker factored recurrent matrix. Our core reason for using a Kronecker factored recurrent matrix stems from it's elegant algebraic and spectral properties. Kronecker matrices are neither low-rank nor block-diagonal but it is multi-scale like the FFT matrix. Kronecker factorization provides a fine control over the model capacity and it's algebraic properties enable us to design fast matrix multiplication algorithms. It's spectral properties allow us to efficiently enforce constraints like positive semi-definitivity, unitarity and stochasticity. As we have shown, we used the spectral properties to efficiently enforce a soft unitary constraint.

Experimental results show that our approach out-perform classical
methods which uses $\mathcal{O}(N^2)$ parameters in the
recurrent matrix. Maybe as important, these experiments show that both on toy
problems (\S~\ref{sec:copy-memory-problem}
and~\ref{sec:adding-problem}), and on real ones
(\S~\ref{sec:pixel-pixel-mnist}, \ref{sec:char-level-lang}, \label{sec:music-model}, and
\S~\ref{sec:fram-phon-class}), while existing methods require tens of
thousands of parameters in the recurrent matrix, competitive or better
than state-of-the-art performance can be achieved with  far less parameters in the recurrent weight matrix.
These surprising results provide a new and counter-intuitive
perspective on desirable memory-capable architectures: the
state should remain of high dimension to allow the use of
high-capacity networks to encode the input into the internal state,
and to extract the predicted value, but the recurrent dynamic itself can,
and should, be implemented with a low-capacity model.

From a practical standpoint, the core idea in our method is applicable
not only to vanilla recurrent neural networks and LSTMS as we showed, but also
to a variety of machine learning models such as feed-forward
networks~\citep{zhou2015exploiting}, random projections and boosting weak learners. Our future work encompasses exploring
other machine learning models and on dynamically increasing the
capacity of the models on the fly during training to have a perfect
balance between computational efficiency and sample complexity.

\bibliography{iclr2018_conference}

\begin{thebibliography}{48}
\providecommand{\natexlab}[1]{#1}
\providecommand{\url}[1]{\texttt{#1}}
\expandafter\ifx\csname urlstyle\endcsname\relax
  \providecommand{\doi}[1]{doi: #1}\else
  \providecommand{\doi}{doi: \begingroup \urlstyle{rm}\Url}\fi

\bibitem[Abadi et~al.(2016)Abadi, Agarwal, Barham, Brevdo, Chen, Citro,
  Corrado, Davis, Dean, Devin, et~al.]{abadi2016tensorflow}
Mart{\'\i}n Abadi, Ashish Agarwal, Paul Barham, Eugene Brevdo, Zhifeng Chen,
  Craig Citro, Greg~S Corrado, Andy Davis, Jeffrey Dean, Matthieu Devin, et~al.
\newblock Tensorflow: Large-scale machine learning on heterogeneous distributed
  systems.
\newblock \emph{arXiv preprint arXiv:1603.04467}, 2016.

\bibitem[Arjovsky et~al.(2016)Arjovsky, Shah, and Bengio]{arjovsky2015unitary}
Martin Arjovsky, Amar Shah, and Yoshua Bengio.
\newblock Unitary evolution recurrent neural networks.
\newblock In \emph{International Conference on Machine Learning}, pp.\
  1120--1128, 2016.

\bibitem[Ba \& Caruana(2014)Ba and Caruana]{ba2014deep}
Jimmy Ba and Rich Caruana.
\newblock Do deep nets really need to be deep?
\newblock In \emph{Advances in neural information processing systems}, pp.\
  2654--2662, 2014.

\bibitem[Ba et~al.(2016)Ba, Grosse, and Martens]{ba2016distributed}
Jimmy Ba, Roger Grosse, and James Martens.
\newblock Distributed second-order optimization using kronecker-factored
  approximations.
\newblock 2016.

\bibitem[Barone(2016)]{barone2016low}
Antonio Valerio~Miceli Barone.
\newblock Low-rank passthrough neural networks.
\newblock \emph{arXiv preprint arXiv:1603.03116}, 2016.

\bibitem[Bengio et~al.(1994)Bengio, Simard, and Frasconi]{bengio1994learning}
Yoshua Bengio, Patrice Simard, and Paolo Frasconi.
\newblock Learning long-term dependencies with gradient descent is difficult.
\newblock \emph{IEEE transactions on neural networks}, 5\penalty0 (2):\penalty0
  157--166, 1994.

\bibitem[Bergstra et~al.(2011)Bergstra, Breuleux, Lamblin, Pascanu, Delalleau,
  Desjardins, Goodfellow, Bergeron, Bengio, and Kaelbling]{bergstra2011theano}
James Bergstra, Olivier Breuleux, Pascal Lamblin, Razvan Pascanu, Olivier
  Delalleau, Guillaume Desjardins, Ian Goodfellow, Arnaud Bergeron, Yoshua
  Bengio, and Pack Kaelbling.
\newblock Theano: Deep learning on gpus with python.
\newblock 2011.

\bibitem[Boulanger-Lewandowski et~al.(2012)Boulanger-Lewandowski, Bengio, and
  Vincent]{boulanger2012modeling}
Nicolas Boulanger-Lewandowski, Yoshua Bengio, and Pascal Vincent.
\newblock Modeling temporal dependencies in high-dimensional sequences:
  Application to polyphonic music generation and transcription.
\newblock \emph{arXiv preprint arXiv:1206.6392}, 2012.

\bibitem[Chen et~al.(2015)Chen, Wilson, Tyree, Weinberger, and
  Chen]{chen2015compressing}
Wenlin Chen, James Wilson, Stephen Tyree, Kilian Weinberger, and Yixin Chen.
\newblock Compressing neural networks with the hashing trick.
\newblock In \emph{International Conference on Machine Learning}, pp.\
  2285--2294, 2015.

\bibitem[Chung et~al.(2014)Chung, Gulcehre, Cho, and
  Bengio]{chung2014empirical}
Junyoung Chung, Caglar Gulcehre, KyungHyun Cho, and Yoshua Bengio.
\newblock Empirical evaluation of gated recurrent neural networks on sequence
  modeling.
\newblock \emph{arXiv preprint arXiv:1412.3555}, 2014.

\bibitem[Chung et~al.(2015)Chung, G{\"u}l{\c{c}}ehre, Cho, and
  Bengio]{chung2015gated}
Junyoung Chung, Caglar G{\"u}l{\c{c}}ehre, Kyunghyun Cho, and Yoshua Bengio.
\newblock Gated feedback recurrent neural networks.
\newblock In \emph{ICML}, pp.\  2067--2075, 2015.

\bibitem[Cisse et~al.(2017)Cisse, Bojanowski, Grave, Dauphin, and
  Usunier]{cisse2017parseval}
Moustapha Cisse, Piotr Bojanowski, Edouard Grave, Yann Dauphin, and Nicolas
  Usunier.
\newblock Parseval networks: Improving robustness to adversarial examples.
\newblock \emph{arXiv preprint arXiv:1704.08847}, 2017.

\bibitem[Courbariaux et~al.(2014)Courbariaux, David, and
  Bengio]{courbariaux2014low}
Matthieu Courbariaux, Jean-Pierre David, and Yoshua Bengio.
\newblock Low precision storage for deep learning.
\newblock \emph{Arxiv: 1412.7024}, 2014.

\bibitem[Denil et~al.(2013)Denil, Shakibi, Dinh, de~Freitas,
  et~al.]{denil2013predicting}
Misha Denil, Babak Shakibi, Laurent Dinh, Nando de~Freitas, et~al.
\newblock Predicting parameters in deep learning.
\newblock In \emph{Advances in Neural Information Processing Systems}, pp.\
  2148--2156, 2013.

\bibitem[Elman(1990)]{elman1990finding}
Jeffrey~L Elman.
\newblock Finding structure in time.
\newblock \emph{Cognitive science}, 14\penalty0 (2):\penalty0 179--211, 1990.

\bibitem[Garofolo et~al.(1993)Garofolo, Lamel, Fisher, Fiscus, and
  Pallett]{garofolo1993darpa}
John~S Garofolo, Lori~F Lamel, William~M Fisher, Jonathon~G Fiscus, and David~S
  Pallett.
\newblock Darpa timit acoustic-phonetic continous speech corpus cd-rom. nist
  speech disc 1-1.1.
\newblock \emph{NASA STI/Recon technical report n}, 93, 1993.

\bibitem[Graves \& Schmidhuber(2005)Graves and
  Schmidhuber]{graves2005framewise}
Alex Graves and J{\"u}rgen Schmidhuber.
\newblock Framewise phoneme classification with bidirectional lstm and other
  neural network architectures.
\newblock \emph{Neural Networks}, 18\penalty0 (5):\penalty0 602--610, 2005.

\bibitem[Grosse \& Martens(2016)Grosse and Martens]{grosse2016kronecker}
Roger Grosse and James Martens.
\newblock A kronecker-factored approximate fisher matrix for convolution
  layers.
\newblock In \emph{International Conference on Machine Learning}, pp.\
  573--582, 2016.

\bibitem[Ha et~al.(2016)Ha, Dai, and Le]{hyper2016}
David Ha, Andrew Dai, and Quoc Le.
\newblock Hypernetworks.
\newblock 2016.

\bibitem[Hardt et~al.(2016)Hardt, Ma, and Recht]{hardt2016gradient}
Moritz Hardt, Tengyu Ma, and Benjamin Recht.
\newblock Gradient descent learns linear dynamical systems.
\newblock \emph{arXiv preprint arXiv:1609.05191}, 2016.

\bibitem[Henaff et~al.(2016)Henaff, Szlam, and LeCun]{henaff2016orthogonal}
Mikael Henaff, Arthur Szlam, and Yann LeCun.
\newblock {Orthogonal RNNs and long-memory tasks}.
\newblock \emph{arXiv preprint arXiv:1602.06662}, 2016.

\bibitem[Hinton et~al.(2012)Hinton, Deng, Yu, Dahl, Mohamed, Jaitly, Senior,
  Vanhoucke, Nguyen, Sainath, et~al.]{hinton2012deep}
Geoffrey Hinton, Li~Deng, Dong Yu, George~E Dahl, Abdel-rahman Mohamed, Navdeep
  Jaitly, Andrew Senior, Vincent Vanhoucke, Patrick Nguyen, Tara~N Sainath,
  et~al.
\newblock Deep neural networks for acoustic modeling in speech recognition: The
  shared views of four research groups.
\newblock \emph{IEEE Signal Processing Magazine}, 29\penalty0 (6):\penalty0
  82--97, 2012.

\bibitem[Hochreiter(1991)]{hochreiter1991untersuchungen}
Sepp Hochreiter.
\newblock \emph{Untersuchungen zu dynamischen neuronalen Netzen}.
\newblock PhD thesis, diploma thesis, institut f{\"u}r informatik, lehrstuhl
  prof. brauer, technische universit{\"a}t m{\"u}nchen, 1991.

\bibitem[Hochreiter \& Schmidhuber(1997)Hochreiter and
  Schmidhuber]{hochreiter1997long}
Sepp Hochreiter and J{\"u}rgen Schmidhuber.
\newblock Long short-term memory.
\newblock \emph{Neural computation}, 9\penalty0 (8):\penalty0 1735--1780, 1997.

\bibitem[Jaeger(2001)]{jaeger2001echo}
Herbert Jaeger.
\newblock The “echo state” approach to analysing and training recurrent
  neural networks-with an erratum note.
\newblock \emph{Bonn, Germany: German National Research Center for Information
  Technology GMD Technical Report}, 148\penalty0 (34):\penalty0 13, 2001.

\bibitem[Jing et~al.(2017)Jing, Shen, Dubcek, Peurifoy, Skirlo, LeCun, Tegmark,
  and Solja{\v{c}}i{\'c}]{jing17a}
Li~Jing, Yichen Shen, Tena Dubcek, John Peurifoy, Scott Skirlo, Yann LeCun, Max
  Tegmark, and Marin Solja{\v{c}}i{\'c}.
\newblock Tunable efficient unitary neural networks ({EUNN}) and their
  application to {RNN}s.
\newblock In Doina Precup and Yee~Whye Teh (eds.), \emph{Proceedings of the
  34th International Conference on Machine Learning}, volume~70 of
  \emph{Proceedings of Machine Learning Research}, pp.\  1733--1741,
  International Convention Centre, Sydney, Australia, 06--11 Aug 2017. PMLR.
\newblock URL \url{http://proceedings.mlr.press/v70/jing17a.html}.

\bibitem[Jose \& Fleuret(2016)Jose and Fleuret]{jose2016scalable}
Cijo Jose and Fran{\c{c}}ois Fleuret.
\newblock Scalable metric learning via weighted approximate rank component
  analysis.
\newblock In \emph{European Conference on Computer Vision}, pp.\  875--890.
  Springer, 2016.

\bibitem[Kingma \& Ba(2014)Kingma and Ba]{kingma2014adam}
Diederik Kingma and Jimmy Ba.
\newblock Adam: A method for stochastic optimization.
\newblock \emph{arXiv preprint arXiv:1412.6980}, 2014.

\bibitem[Krizhevsky et~al.(2012)Krizhevsky, Sutskever, and
  Hinton]{krizhevsky2012imagenet}
Alex Krizhevsky, Ilya Sutskever, and Geoffrey~E Hinton.
\newblock Imagenet classification with deep convolutional neural networks.
\newblock In \emph{Advances in neural information processing systems}, pp.\
  1097--1105, 2012.

\bibitem[Le et~al.(2013)Le, Sarl{\'o}s, and Smola]{le2013fastfood}
Quoc Le, Tam{\'a}s Sarl{\'o}s, and Alex Smola.
\newblock Fastfood-approximating kernel expansions in loglinear time.
\newblock In \emph{Proceedings of the international conference on machine
  learning}, 2013.

\bibitem[Le et~al.(2015)Le, Jaitly, and Hinton]{le2015simple}
Quoc~V Le, Navdeep Jaitly, and Geoffrey~E Hinton.
\newblock A simple way to initialize recurrent networks of rectified linear
  units.
\newblock \emph{arXiv preprint arXiv:1504.00941}, 2015.

\bibitem[LeCun et~al.(1990)LeCun, Denker, and Solla]{lecun1990optimal}
Yann LeCun, John~S Denker, and Sara~A Solla.
\newblock Optimal brain damage.
\newblock In \emph{Advances in neural information processing systems}, pp.\
  598--605, 1990.

\bibitem[Marcus et~al.(1993)Marcus, Marcinkiewicz, and
  Santorini]{marcus1993building}
Mitchell~P Marcus, Mary~Ann Marcinkiewicz, and Beatrice Santorini.
\newblock Building a large annotated corpus of english: The penn treebank.
\newblock \emph{Computational linguistics}, 19\penalty0 (2):\penalty0 313--330,
  1993.

\bibitem[Martens \& Grosse(2015)Martens and Grosse]{martens2015optimizing}
James Martens and Roger Grosse.
\newblock Optimizing neural networks with kronecker-factored approximate
  curvature.
\newblock In \emph{International Conference on Machine Learning}, pp.\
  2408--2417, 2015.

\bibitem[Mermelstein(1976)]{mermelstein1976distance}
Paul Mermelstein.
\newblock Distance measures for speech recognition, psychological and
  instrumental.
\newblock \emph{Pattern recognition and artificial intelligence}, 116:\penalty0
  374--388, 1976.

\bibitem[Mhammedi et~al.(2016)Mhammedi, Hellicar, Rahman, and
  Bailey]{mhammedi2016efficient}
Zakaria Mhammedi, Andrew Hellicar, Ashfaqur Rahman, and James Bailey.
\newblock Efficient orthogonal parametrisation of recurrent neural networks
  using householder reflections.
\newblock \emph{arXiv preprint arXiv:1612.00188}, 2016.

\bibitem[Mikolov(2012)]{mikolov2012statistical}
Tom{\'a}{\v{s}} Mikolov.
\newblock \emph{Statistical Language Models Based on Neural Networks}.
\newblock PhD thesis, Ph. D. thesis, Brno University of Technology, 2012.

\bibitem[Pascanu et~al.(2013)Pascanu, Mikolov, and
  Bengio]{pascanu2013difficulty}
Razvan Pascanu, Tomas Mikolov, and Yoshua Bengio.
\newblock On the difficulty of training recurrent neural networks.
\newblock \emph{ICML (3)}, 28:\penalty0 1310--1318, 2013.

\bibitem[Paszke et~al.(2017)Paszke, Gross, and Chintala]{paszke2017pytorch}
Adam Paszke, Sam Gross, and Soumith Chintala.
\newblock Pytorch, 2017.

\bibitem[Srivastava et~al.(2015)Srivastava, Greff, and
  Schmidhuber]{srivastava2015highway}
Rupesh~Kumar Srivastava, Klaus Greff, and J{\"u}rgen Schmidhuber.
\newblock Highway networks.
\newblock \emph{arXiv preprint arXiv:1505.00387}, 2015.

\bibitem[Van~Loan(2000)]{van2000ubiquitous}
Charles~F Van~Loan.
\newblock The ubiquitous kronecker product.
\newblock \emph{Journal of computational and applied mathematics}, 123\penalty0
  (1):\penalty0 85--100, 2000.

\bibitem[Vorontsov et~al.(2017)Vorontsov, Trabelsi, Kadoury, and
  Pal]{vorontsov2017orthogonality}
Eugene Vorontsov, Chiheb Trabelsi, Samuel Kadoury, and Chris Pal.
\newblock On orthogonality and learning recurrent networks with long term
  dependencies.
\newblock \emph{arXiv preprint arXiv:1702.00071}, 2017.

\bibitem[Wisdom et~al.(2016)Wisdom, Powers, Hershey, Le~Roux, and
  Atlas]{wisdom2016full}
Scott Wisdom, Thomas Powers, John Hershey, Jonathan Le~Roux, and Les Atlas.
\newblock Full-capacity unitary recurrent neural networks.
\newblock In \emph{Advances In Neural Information Processing Systems}, pp.\
  4880--4888, 2016.

\bibitem[Wu et~al.(2017)Wu, Mansimov, Liao, Grosse, and Ba]{wu2017scalable}
Yuhuai Wu, Elman Mansimov, Shun Liao, Roger Grosse, and Jimmy Ba.
\newblock Scalable trust-region method for deep reinforcement learning using
  kronecker-factored approximation.
\newblock \emph{arXiv preprint arXiv:1708.05144}, 2017.

\bibitem[Yang et~al.(2015)Yang, Moczulski, Denil, de~Freitas, Smola, Song, and
  Wang]{yang2015deep}
Zichao Yang, Marcin Moczulski, Misha Denil, Nando de~Freitas, Alex Smola,
  Le~Song, and Ziyu Wang.
\newblock Deep fried convnets.
\newblock In \emph{Proceedings of the IEEE International Conference on Computer
  Vision}, pp.\  1476--1483, 2015.

\bibitem[Zhang et~al.(2015)Zhang, Yu, Guo, Kumar, Wang, and
  Chang]{zhang2015fast}
Xu~Zhang, Felix~X Yu, Ruiqi Guo, Sanjiv Kumar, Shengjin Wang, and Shi-Fu Chang.
\newblock Fast orthogonal projection based on kronecker product.
\newblock In \emph{Proceedings of the IEEE International Conference on Computer
  Vision}, pp.\  2929--2937, 2015.

\bibitem[Zhou et~al.(2015)Zhou, Wu, Wu, and Zhou]{zhou2015exploiting}
Shuchang Zhou, Jia-Nan Wu, Yuxin Wu, and Xinyu Zhou.
\newblock Exploiting local structures with the kronecker layer in convolutional
  networks.
\newblock \emph{arXiv preprint arXiv:1512.09194}, 2015.

\bibitem[Zilly et~al.(2016)Zilly, Srivastava, Koutn{\'\i}k, and
  Schmidhuber]{zilly2016recurrent}
Julian~Georg Zilly, Rupesh~Kumar Srivastava, Jan Koutn{\'\i}k, and J{\"u}rgen
  Schmidhuber.
\newblock Recurrent highway networks.
\newblock \emph{arXiv preprint arXiv:1607.03474}, 2016.

\end{thebibliography}
\bibliographystyle{iclr2018_conference}
\newpage

\section*{Appendices}
\renewcommand{\thesubsection}{\Alph{subsection}}

\subsection{Analysis of vanishing and exploding gradients in RNN}

Given a sequence of $T$ input vectors: $\mathbf{x}_{0}, \mathbf{x}_{1}, \dots, \mathbf{x}_{T-1}$, let us consider the operation at
the hidden layer $t$ of a recurrent neural network:

\begin{align}
\mathbf{z}_{t} & = \mathbf{W}_{t}\mathbf{h}_{t-1} + \mathbf{U}_{t}\mathbf{x}_{t} +  \mathbf{b} \label{eqA11} \\
\mathbf{h}_{t} & = \sigma(\mathbf{z}_{t}) \label{eqA12}
\end{align}

By the chain rule,
\begin{align}
\frac{\partial \mathcal{L}}{\partial \mathbf{h}_{t}} & =  \frac{\partial \mathcal{L}}{\partial \mathbf{h}_{T}} \frac{\partial \mathbf{h}_{T}}{\partial \mathbf{h}_{t}} \label{eqA13}\\
   & = \frac{\partial \mathcal{L}}{\partial \mathbf{h}_{T}}
   \prod_{k=t}^{T-1}\frac{\partial \mathbf{h}_{k+1}}{\partial \mathbf{h}_{k}} =
   \frac{\partial \mathcal{L}}{\partial \mathbf{h}_{T}} \prod_{k=t}^{T-1}
   \mathbf{J}_{k+1}\mathbf{W}^{T} \label{eqA14}
\end{align}

where $\sigma$ is the non-linear activation function and
$\mathbf{J}_{k+1} =  diag(\sigma^{'}(\mathbf{z}_{k+1}))$ is the Jacobian matrix
of the non-linear activation function.

\begin{align}
\norm{ \frac{\partial \mathcal{L}}{\partial \mathbf{h}_{t}} } & =   \norm{ \frac{\partial
    \mathcal{L}}{\partial \mathbf{h}_{T}}
  \prod_{k=t}^{T-1}\mathbf{J}_{k+1}\mathbf{W}^{T}} \label{eqA15} \\
&\leq \norm{ \frac{\partial \mathcal{L}}{\partial
    \mathbf{h}_{T}}}\prod_{k=t}^{T-1}\norm{\mathbf{J}_{k+1}\mathbf{W}^{T}} \label{eqA16}
\\
&\leq \norm{ \frac{\partial \mathcal{L}}{\partial
    \mathbf{h}_{T}}}\norm{\mathbf{W}}^{T-t}\prod_{k=t}^{T-1}\norm{\mathbf{J}_{k+1}} \label{eqA17}
\end{align}
From equation \ref{eqA17} it is clear the norm of the gradient is exponentially
dependent upon two factors along the time horizon:
\begin{itemize}
\item The norm of the Jacobian matrix of the non-linear activation
  function $\norm{\mathbf{J}_{k+1}}$.
\item The norm of the hidden to hidden weight matrix $\norm{\mathbf{W}}$.
\end{itemize}

These two factors are causing the vanishing and exploding gradient problem.

Since the gradient of the standard non-linear activation functions
such as tanh and ReLU are bounded between [0, 1],
$\norm{\mathbf{J}_{k+1}}$ does not contribute to the exploding
gradient problem but it can still cause vanishing gradient problem.

\subsection{Long short-term memory (LSTM) \citep{hochreiter1997long}}

LSTM networks presented an elegant solution to the vanishing and exploding gradients through the introduction of gating mechanism. Apart from the standard hidden state in RNN, LSTM introduced one more state called cell state $c_t$. LSTM has three different gates whose functionality is described as follows:

\begin{itemize}
\item Forget gate ($\mathbf{W}_f, \mathbf{U}_f, \mathbf{b}_f$):Decides what information to keep and erase from the previous cell state.
\item Input gate ($\mathbf{W}_i, \mathbf{U}_f, \mathbf{b}_i$): Decides what new information should be added to the cell state.
\item Output gate ($\mathbf{W}_o, \mathbf{U}_o, \mathbf{b}_o$):Decides which information from the cell state is going to the output.
\end{itemize}

In addition to the gates, LSTM prepares candidates for the information from the input gate that might get added to the cell state through the action of input gate. Let's denote the parameters describing the function that prepares this candidate information as $\mathbf{W}_c, \mathbf{U}_c, \mathbf{b}_c$.

Given a sequence of $T$ input vectors: $\mathbf{x}_{0}, \mathbf{x}_{1}, \dots, \mathbf{x}_{T-1}$, at a time step $t$ LSTM performs the following:

\begin{align}
\mathbf{f}_{t} & = \sigma(\mathbf{W}_f\mathbf{h}_{t-1} + \mathbf{U}_f\mathbf{x}_{t} + \mathbf{b}_f) \label{eqg1} \\
\mathbf{i}_{t} & = \sigma(\mathbf{W}_i\mathbf{h}_{t-1} + \mathbf{U}_i\mathbf{x}_{t} + \mathbf{b}_i) \label{eqg2} \\
\mathbf{o}_{t} & = \sigma(\mathbf{W}_o\mathbf{h}_{t-1} + \mathbf{U}_o\mathbf{x}_{t} + \mathbf{b}_o) \label{eqg3} \\
\hat{\mathbf{c}}_{t} & = \tau(\mathbf{W}_c\mathbf{h}_{t-1} + \mathbf{U}_c\mathbf{x}_{t} + \mathbf{b}_c) \label{eqg4} \\
\mathbf{c}_{t} & =  \mathbf{c}_{t-1} \odot \mathbf{f}_{t} + \hat{\mathbf{c}}_{t} \odot \mathbf{i}_{t}  \\
\mathbf{h}_{t} & =  \tau(\mathbf{c}_{t}) \odot \mathbf{o}_{t}
\end{align}

where $\sigma(.)$ and $\tau(.)$ are the point-wise sigmoid and tanh functions. $\odot$ indicates element-wise multiplication. The first three are gating operations and the 4th one prepares the candidate information. The 5th operation updates the cell-state and finally in the 6th operation the output gate decided what to go into the current hidden state.

\subsection{Unitary evolution RNN \citep{arjovsky2015unitary}}
Unitary evolution RNN (uRNN) proposed to solve the vanishing and exploding gradients through a unitary recurrent matrix, which is for the form:
\begin{equation}
\mathbf{W}=\mathbf{D}_{3}\mathbf{R}_{2}\mathscr{F}^{-1}\mathbf{D}_{2}\Pi\mathbf{R}_{1}\mathscr{F}\mathbf{D}_{1}.
\end{equation}
Where:
\begin{itemize}
\item $\mathbf{D}_1, \mathbf{D}_2, \mathbf{D}_3$: Diagonal matrices whose diagonal entries are of the from $\mathbf{D}_{kk} = e^{i\theta_k}$, implies each matrix have $N$ parameters, $(\theta_0, \dots, \theta_{N-1})$.
\item $\mathscr{F}$ and $\mathscr{F}^{-1}$: Fast Fourier operator and inverse fast Fourier operator respectively.
\item $\mathbf{R}_1, \mathbf{R}_2$: Householder reflections. $R = \mathbf{I} - 2\frac{\mathbf{v}\mathbf{v}^{H}}{\norm{\mathbf{v}}}$, where $v \in \mathbb{C}^{N}$.
\end{itemize}

The total number of parameters for this uRNN operator is $7N$ and the matrix vector can be done  $Nlog(N)$ time. It is parameter efficient and fast but not flexible and suffers from the retention of noise and difficulty in optimization due its unitarity.
\subsection{Full capacity unitary RNN \citep{wisdom2016full}}
Full capacity unitary RNN (FC uRNN) does optimization on the full unitary set instead on a subset like uRNN. That is FC uRNN's recurrent matrix $\mathbf{W} \in U(N)$. There are several challenges in optimization over unitary manifold especially when combined with stochastic gradient method. The primary challenge being the optimization cost is $\mathcal{O}(N^3)$ per step.  




\subsection{Orthogonal RNN \citep{mhammedi2016efficient}}
Orthogonal RNN (oRNN) parametrizes the recurrent matrices using Householder reflections.
\begin{equation}
\mathbf{W}=\mathcal{H}_{N}(\mathbf{v}_N)...\mathcal{H}_{N-K+1}(\mathbf{v}_{N-k+1}).
\end{equation}
where 
\begin{equation}\label{}
\mathcal{H}_{K}(\mathbf{v}_{K}) = \left[ \begin{array}{cc}                                                                                                                                                                                             \mathbf{I}_{N - K} & 0 \\                                                                                                                                                                                               
0 & \mathbf{I}_{K} - 2\frac{\mathbf{v}_{K}\mathbf{v}_{K}^{H}}{\norm{\mathbf{v}_{K}}} \end{array} \right]
\end{equation}
and
\begin{equation}
\mathcal{H}_{1}(\mathbf{v}) = \left[\begin{array}{cc}                                                                                                                                                                                             \mathbf{I}_{N-1} & 0 \\                                                                                                                                                                                               
0 & \mathbf{v} \in \{-1, 1\} \end{array} \right]
\end{equation}
where $\mathbf{v}_{K} \in \mathbb{R}^{K}$. The number of parameters in this parametrization is $\mathcal{O}(NK)$. When $N = K = 1$ and $v = 1$, it spans the rotation subset and when $v=-1$, it spans the full reflection subset.

\subsection{Properties of Kronecker matrix~\citep{van2000ubiquitous}}

Consider a matrix $\mathbf{W} \in \mathbb{C}^{N \times N} $ factorized as a Kronecker product of $F$ matrices $\mathbf{W}_0, \dots, \mathbf{W}_{F-1}$,
\begin{align}
\mathbf{W} & =  \mathbf{W}_{0} \otimes \dots \otimes
\mathbf{W}_{F-1} =  \otimes_{i=0}^{F-1}\mathbf{W}_{i} \label{eq4}.
\end{align}
Where each $\mathbf{W}_i \in \mathbb{C}^{P_{i} \times Q_{i}}$ respectively and $ \prod_{i=0}^{f-1}P_{i} = \prod_{i=0}^{F-1}Q_{i} = N$. $\mathbf{W}_i$'s are called as Kronecker factors.

If the factors $\mathbf{W}_i$'s  are
$\begin{array}{@{}l@{}c@{}}
\quad&\left\{
  \begin{tabular}{@{}l@{}}
  Nonsingular\\
  Symmetric \\
  Stochatsic \\
  Orthogonal \\
  Unitary \\
  PSD \\
  Toeplitz
  \end{tabular}
\right\}
\end{array}$
then $\mathbf{W}$ is $\begin{array}{@{}l@{}c@{}}
\quad&\left\{
  \begin{tabular}{@{}l@{}}
  Nonsingular\\
  Symmetric \\
  Stochatsic \\
  Orthogonal \\
  Unitary \\
  PSD \\
  Block Toeplitz
  \end{tabular}
\right\}
\end{array}$

\begin{theorem}
If $\forall i \in {0, \dots, F-1}$, $\mathbf{W}_{i}$ is unitary then $\mathbf{W}$ is also
unitary.
\end{theorem}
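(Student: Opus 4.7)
The plan is to prove this by directly computing $\mathbf{W}^H\mathbf{W}$ and showing it equals the identity, using two standard algebraic properties of the Kronecker product: the conjugate-transpose rule $(\mathbf{A}\otimes\mathbf{B})^H = \mathbf{A}^H\otimes\mathbf{B}^H$, and the mixed-product property $(\mathbf{A}\otimes\mathbf{B})(\mathbf{C}\otimes\mathbf{D}) = (\mathbf{A}\mathbf{C})\otimes(\mathbf{B}\mathbf{D})$ whenever the inner dimensions are compatible. Both identities are classical and are precisely the properties cited from \citep{van2000ubiquitous}.

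First I would handle the base case $F=2$: given unitary $\mathbf{W}_0$ and $\mathbf{W}_1$, compute
\begin{equation*}
(\mathbf{W}_0\otimes\mathbf{W}_1)^H(\mathbf{W}_0\otimes\mathbf{W}_1) = (\mathbf{W}_0^H\otimes\mathbf{W}_1^H)(\mathbf{W}_0\otimes\mathbf{W}_1) = (\mathbf{W}_0^H\mathbf{W}_0)\otimes(\mathbf{W}_1^H\mathbf{W}_1) = \mathbf{I}\otimes\mathbf{I} = \mathbf{I}_N,
\end{equation*}
and symmetrically $(\mathbf{W}_0\otimes\mathbf{W}_1)(\mathbf{W}_0\otimes\mathbf{W}_1)^H = \mathbf{I}_N$, using the fact that a Kronecker product of identity matrices of appropriate dimensions is itself the identity of dimension equal to the product.

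Next I would extend to arbitrary $F$ by induction. Assuming $\mathbf{W}' = \otimes_{i=0}^{F-2}\mathbf{W}_i$ is unitary, write $\mathbf{W} = \mathbf{W}'\otimes\mathbf{W}_{F-1}$ and apply the base case to $\mathbf{W}'$ and $\mathbf{W}_{F-1}$. Alternatively, one can give a one-line direct proof that avoids induction by iterating the mixed-product property:
\begin{equation*}
\mathbf{W}^H\mathbf{W} = \Bigl(\bigotimes_{i=0}^{F-1}\mathbf{W}_i^H\Bigr)\Bigl(\bigotimes_{i=0}^{F-1}\mathbf{W}_i\Bigr) = \bigotimes_{i=0}^{F-1}\bigl(\mathbf{W}_i^H\mathbf{W}_i\bigr) = \bigotimes_{i=0}^{F-1}\mathbf{I}_{Q_i} = \mathbf{I}_N,
\end{equation*}
and analogously for $\mathbf{W}\mathbf{W}^H$, concluding that $\mathbf{W}$ is unitary.

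There is no real obstacle here: the whole argument is an algebraic consequence of two well-known Kronecker identities, and the only minor bookkeeping concern is to verify that the dimension constraints $\prod_i P_i = \prod_i Q_i = N$ together with unitarity (which forces each $\mathbf{W}_i$ to be square with $P_i=Q_i$) guarantee that all the inner products in the mixed-product step are well-defined and that the resulting Kronecker product of identities is $\mathbf{I}_N$.
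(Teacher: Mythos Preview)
Your proposal is correct and follows essentially the same approach as the paper: compute $\mathbf{W}^H\mathbf{W}$ by applying the conjugate-transpose rule and the mixed-product property of Kronecker products to reduce it to $\bigotimes_i(\mathbf{W}_i^H\mathbf{W}_i)=\mathbf{I}$. Your version is slightly more thorough in also checking $\mathbf{W}\mathbf{W}^H$ and noting the dimension bookkeeping, but the argument is the same.
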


\begin{proof}
\begin{align}
\mathbf{W}^{H}\mathbf{W} &= (\mathbf{W}_{0} \otimes \dots \otimes
\mathbf{W}_{f-1})^{H} (\mathbf{W}_{0} \otimes \dots \otimes
\mathbf{W}_{f-1}) \\
&= (\mathbf{W}_{0}^{H} \otimes \dots \otimes
\mathbf{W}_{f-1}^{H}) (\mathbf{W}_{0} \otimes \dots \otimes
\mathbf{W}_{f-1}) \\
&= \mathbf{W}_{0}^{H}\mathbf{W}_{0} \otimes \dots \otimes
\mathbf{W}_{f-1}^{H}\mathbf{W}_{f-1} = \mathbf{I}.
\end{align}
\end{proof}

\subsection{Product between a dense matrix and a kronecker matrix}
\label{sec:kronf}

For simplicity here we use real number notations. Consider a dense matrix $\mathbf{X} \in \mathbb{R}^{M \times K}$ and a Kronecker factored matrix $\mathbf{W} \in \mathbb{R}^{N \times K}$.  That is $\mathbf{W} = \otimes_{f=0}^{F-1}\mathbf{W}_{f}$, where each $\mathbf{W}_f \in \mathbb{R}^{P_{f} \times Q_{f}}$ respectively and $ \prod_{f=0}^{F-1}P_{f} = N$ and  $\prod_{f=0}^{F-1}Q_{f} = K$. Let us illustrate the matrix product $\mathbf{X}\mathbf{W}^{T}$ resulting in a matrix $\mathbf{Y} \in \mathbb{R}^{M \times N}$.

\begin{align}
\mathbf{Y}  = \mathbf{X}\mathbf{W}^{T}. \label{eqp1}
\end{align}
The computational complexity first expanding the Kronecker factored matrix and then computing the matrix product is $\mathcal{O}(MNK)$. This can be reduced by exploiting the recursive definition of Kronecker matrices. For examples when $N = K$ and $\forall_f \{P_f = Q_f = 2\}$, the matrix product can be computed in $\mathcal{O}(MN\log{}N)$ time instead of $\mathcal{O}(MN^2)$.

The matrix product in  \ref{eqp1} can be recursively defined as

\begin{align}
\mathbf{Y}  &= ( \dots (\mathbf{X} \odot \mathbf{W}_{0}^{T}) \otimes \dots \otimes \mathbf{W}_{F-1}^{T}). \label{eqp2}
\end{align}

Please note that the binary operator $\odot$ is not the standard matrix multiplication operator but instead it denotes a strided matrix multiplication. The stride is computed according to the algebra of Kronecker matrices. Let us define $\mathbf{Y}$ recursively:

\begin{align}
\mathbf{Y}_{0}  &= \mathbf{X} \odot \mathbf{W}_{0} \\
\mathbf{Y}_{f}  &= \mathbf{Y}_{f-1} \odot \mathbf{W}_{f}. \label{eqp3}
\end{align}

Combining equation \ref{eqp2} and \ref{eqp3}
\begin{align}
\mathbf{Y}  = \mathbf{Y}_{F-1} =  ( \dots (\mathbf{X} \odot \mathbf{W}_{0}^{T}) \otimes \dots \otimes \mathbf{W}_{F-1}^{T}). \label{eqp2}
\end{align}

We use the above notation for $\mathbf{Y}$ in the algorithm. That is the algorithm illustrated here will cache all the intermediate outputs ($\mathbf{Y}_0, \dots, \mathbf{Y}_{F-1}$) instead of just $\mathbf{Y}_{F-1}$. These intermediate outputs are then later to compute the gradients during the back-propagation. This cache will save some computation during the back-propagation. If the model is just being used for inference then the algorithm can the organized in such a way that we do not need to cache the intermediate outputs and thus save memory.

Algorithm for computing the product between a dense matrix and a Kronecker factored matrix\ref{eqp2} is given below \ref{alg:alg_kronf}. All the matrices are assumed to be stored in row major order. For simplicity the algorithm is illustrated in a serial fashion. Please note the lines 4 to 15 except lines 9-11 can be trivially parallelized as it writes to independent memory locations. The GPU implementation exploits this fact.

\begin{algorithm}
  \caption{Dense matrix product with a Kronecker matrix, $\mathbf{Y}  = ( \dots (\mathbf{X} \mathbf{W}_{0}^{T}) \otimes \dots \otimes \mathbf{W}_{F-1}^{T})$}
  \begin{algorithmic}[1]
    \INPUT  Dense matrix  $\mathbf{X} \in
    \mathbb{R}^{M \times K}$, Kronecker factors $ \{\mathbf{W}_0, \dots, \mathbf{W}_{F-1}\}: \mathbf{W}_f \in
    \mathbb{R}^{p_f \times q_f} $, Size of each Kronecker factors
    $\{(P_0, Q_0), \dots, (P_{F-1}, Q_{F-1})\}: \prod_{f=0}^{F-1}P_f = N, \prod_{f=0}^{F-1}Q_f = K$,
    \OUTPUT Output matrix $Y_{F-1} \in \mathbb{R}^{M \times N}$
    \FOR{$f=0$ to $F-1$}
    \STATE $stride  = K / Q_{f}$
    \STATE $index = 0$
    \FOR{$m=0$ to $M-1$}
    \STATE $\mathbf{X}_{m}  = \mathbf{X} +  m \times K$
    \FOR{$p=0$ to $P_{f}-1$}
    \FOR{$s=0$ to $stride-1$}
    \STATE $\mathbf{Y}_{f}[index] = 0$
    \FOR{$q=0$ to $Q_{k}-1$}
    \STATE $\mathbf{Y}_{f}[index] = \mathbf{Y}_{f}[index] +  \mathbf{X}_{m}[q \times stride + s] \times \mathbf{W}_f[p \times Q_{f}
         + q]$
    \ENDFOR
    \STATE $index = index + 1$
    \ENDFOR
    \ENDFOR
    \ENDFOR
    \STATE $K = stride$
    \STATE $M = M \times P_{f}$
    \STATE $X = Y_f$
    \ENDFOR
\end{algorithmic}\label{alg:alg_kronf}
\end{algorithm}

\subsection{Gradient computation in a kronecker layer}

Following the notations from the above section \ref{sec:kronf}, here we illustrate the algorithm for computing the gradients in a Kronecker layer. To be clear and concrete the Kronecker layer does the following computation in the forward pass \ref{eqp3}.

\begin{align}
\mathbf{Y}  = \mathbf{Y}_{F-1} =  ( \dots (\mathbf{X} \odot \mathbf{W}_{0}^{T}) \otimes \dots \otimes \mathbf{W}_{F-1}^{T}). \label{eqp2}
\end{align}

That is, the Kronecker layer is parametrized by a Kronecker factored matrix $\mathbf{W} = \otimes_{f=0}^{F-1}\mathbf{W}_{f}$ stored as it factors $ \{\mathbf{W}_0, \dots, \mathbf{W}_{F-1}\} $ and it takes an input $\mathbf{X}$ and produces output $\mathbf{Y}  = \mathbf{Y}_{F-1}$ using the algorithm \ref{alg:alg_kronf}.

The following algorithm \ref{alg:alg_kronb} computes the Gradient of the Kronecker factors: $ \{\mathbf{g}\mathbf{W}_0, \dots, \mathbf{g}\mathbf{W}_{F-1}\} $ and the Jacobian of  the input matrix $\mathbf{g}\mathbf{X}$ given the Jacobian of the output matrix:  $\mathbf{g}\mathbf{Y}  = \mathbf{g}\mathbf{Y}_{F-1}$.

\begin{algorithm}
  \caption{Gradient computation in a Kronecker layer.}
  \begin{algorithmic}[1]
    \INPUT Input matrix  $\mathbf{X} \in
    \mathbb{R}^{M \times K}$,  Kronecker factors $ \{\mathbf{W}_0, \dots, \mathbf{W}_{F-1}\}: \mathbf{W}_f \in
    \mathbb{R}^{p_f \times q_f} $, Size of each Kronecker factors
    $\{(P_0, Q_0), \dots, (P_{F-1}, Q_{F-1})\}: \prod_{f=0}^{F-1}P_f = N, \prod_{f=0}^{F-1}Q_f = K$, All intermediate output matrices from the forward pass: $ \{\mathbf{Y}_0, \dots, \mathbf{Y}_{F-1}\}$,
    Jacobian of output matrix: $\mathbf{g}\mathbf{Y}_{F-1} \in \mathbb{R}^{M \times N}$
    \OUTPUT Gradient of Kronecker factors: $ \{\mathbf{g}\mathbf{W}_0, \dots, \mathbf{g}\mathbf{W}_{F-1}\} $ and Jacobian of input matrix: $\mathbf{g}\mathbf{X} \in \mathbb{R}^{M \times N}$.
    \STATE $T = M \times N$
    \STATE $strideP = 1$
    \STATE $strideQ = 1$
    \STATE $\mathbf{g}\mathbf{Y} = \mathbf{g}\mathbf{Y}_{F-1}$
    \FOR{$f=F-1$ to $0$}
    \STATE $R  = strideP \times P_{f}$
    \STATE $S  = strideQ \times Q_{f}$
    \STATE $T  = T / P_{f}$
    \STATE $\mathbf{Z} = nullptr$
    \STATE $\mathbf{g}\mathbf{Z} = nullptr$
    \IF {$f == 0$}
    \STATE $\mathbf{Z} = \mathbf{X}$
    \STATE $\mathbf{g}\mathbf{Z} = \mathbf{g}\mathbf{X}$
    \ELSE
    \STATE $\mathbf{g}\mathbf{Z} = \mathbf{Y}_{f-1}$
    \STATE $\mathbf{Z} = \mathbf{g}\mathbf{Z}$
    \ENDIF
    \STATE $index = 0$
    \FOR{$t=0$ to $T-1$}
    \STATE $\mathbf{Z}_{t}  = \mathbf{Z} +  t \times S$
    \FOR{$p=0$ to $P_{f}-1$}
    \FOR{$s=0$ to $strideQ-1$}
    \FOR{$q=0$ to $Q_{k}-1$}
    \STATE $\mathbf{g}\mathbf{W}_{f}[p \times Q_{k} + 1] = \mathbf{g}\mathbf{W}_{f}[p \times Q_{k} + 1] +  \mathbf{Z}_{t}[q \times strideQ + s] \times \mathbf{g}\mathbf{Y}[index]$
    \ENDFOR
    \STATE $index = index + 1$
    \ENDFOR
    \ENDFOR
    \ENDFOR
    \STATE $index = 0$
    \FOR{$t=0$ to $T-1$}
    \STATE $\mathbf{g}\mathbf{Y}_{t}  = \mathbf{g}\mathbf{Y} +  t \times R$
    \FOR{$p=0$ to $P_{f}-1$}
    \FOR{$s=0$ to $strideQ-1$}
    \STATE $\mathbf{g}\mathbf{Z}[index] = 0$
    \FOR{$q=0$ to $Q_{k}-1$}
    \STATE $\mathbf{g}\mathbf{Z}[index] = \mathbf{g}\mathbf{Z}[index]+  \mathbf{g}\mathbf{Y}[q \times strideQ + s] \times  \mathbf{W}_f[q \times P_{f}
         + q] $
    \ENDFOR
    \STATE $index = index + 1$
    \ENDFOR
    \ENDFOR
    \ENDFOR
    \STATE $\mathbf{g}\mathbf{Y} = \mathbf{g}\mathbf{Z}$ {\color{blue}{\;\;\;//We reuse the memory for the intermediate outputs to store the gradients.}}
    \STATE $strideQ = S$
    \STATE $strideP = R \times Q_{f} / P_{f}$
    \ENDFOR
\end{algorithmic}\label{alg:alg_kronb}
\end{algorithm}

\end{document}